\pgfplotsset{compat=1.18}
\newcommand{\name}{\texttt{PoisonM }}
\newcommand{\namenospace}{\texttt{PoisonM}}
\newcommand{\MItest}{\texttt{T}}
\newcommand{\MI}{MI}
\theoremstyle{plain}
\newtheorem{theorem}{Theorem}[section]
\newtheorem{lemma}[theorem]{Lemma}
\theoremstyle{definition}
\newtheorem{definition}[theorem]{Definition}
\theoremstyle{remark}
\DeclareMathOperator*{\argmax}{arg\,max}
\DeclareMathOperator*{\argmin}{arg\,min}
\title{What Really is a Member? Discrediting Membership Inference via Poisoning}
\author{%
  \textbf{Neal Mangaokar}$^{*\dagger}$~~~
  \textbf{Ashish Hooda}\thanks{Indicates equal contribution. $^{\P}$Now at Google DeepMind. }~~$^{\ddagger \P}$~~~
  \textbf{Zhuohang Li}$^{\S}$ ~~~
  \textbf{Bradley A. Malin}$^{\S}$ \\
  \textbf{Kassem Fawaz}$^{\ddagger}$~~~
  \textbf{Somesh Jha}$^{\ddagger}$ ~~~
  \textbf{Atul Prakash}$^{\dagger}$ ~~~
  \textbf{Amrita Roy Chowdhury}$^{\dagger}$ \\
  $^\dagger$ University of Michigan, Ann Arbor~~~$^\ddag$ University of Wisconsin-Madison~~~$^\S$ Vanderbilt University
}
\newcommand{\inn}{\in_{\scaleto{\mathcal{N}_r}{5pt}}}
\algnewcommand{\LineComment}[1]{\State \(\triangleright\) #1}
\newcommand{\innot}{\notin_{\scaleto{\mathcal{N}_r}{5pt}}}
\begin{document}

\maketitle

\begin{abstract}

Membership inference tests aim to determine whether a particular data point was included in a language model's training set. However, recent works have shown that such tests often fail under the strict definition of membership based on exact matching, and have suggested relaxing this definition to include semantic neighbors as members as well. In this work, we show that membership inference tests are still \textit{unreliable} under this relaxation --- it is possible to poison the training dataset in a way that causes the test to produce incorrect predictions for a target point. 
We theoretically reveal a trade-off between a test’s accuracy and its robustness to poisoning.  We also present a concrete instantiation of this poisoning attack and empirically validate its effectiveness. Our results show that it can degrade the performance of existing tests to well below random. 
\end{abstract}
\vspace{-1mm}
\section{Introduction}\label{sec:intro}
\vspace{-1mm}
A central question in the machine learning (ML) community is whether a model was trained on a particular data point~\cite{shokri2017membership}. While this question has long been of academic interest, the recent surge in large language models (LLMs) has made it more relevant across new practical contexts.
For instance, these models are often trained on massive web-scraped datasets~\cite{biderman2023pythia}, which may include copyrighted content. This has sparked high-profile legal disputes between model providers and creative professionals (e.g., authors)~\cite{times_oai,authors_oai,ziff_oai}, centered on whether the disputed content was part of the training data. In another example, recent legal regulations worldwide have mandated auditing of ML models~\cite{eu_ai_act, cheong2024transparency}. In such cases,  model owners may need to demonstrate that specific data points—such as those from the minority class—were indeed used during training, in order to support claims of fairness or regulatory compliance~\cite{lacmanovic2025artificial}.

Currently, membership inference (\MI) testing is the de facto approach for answering this question. Existing tests employ a variety of heuristics to analyze the loss landscape of the model, and output a ``membership score'' --- a high score typically indicates membership. However, a growing body of research has questioned the reliability of these tests~\cite{duan2024membership,zhang2024membership,das2024blind}. A key concern is the ambiguity surrounding the definition of what it means for a data point to be a “member.” For instance, if “Harry Potter drew his wand” appears in the training data, should its paraphrase “The wand was drawn by Harry Potter” also be considered a member? To address this, recent works have suggested relaxing the definition of membership to a neighborhood-based one—where all semantic neighbors of a training point are also treated as members~\cite{duan2024membership,brown2022does,liu2025language}.

  In this work, we show that even under the relaxed, neighborhood-based definition, membership inference \textit{remains} unreliable. We demonstrate this through a new lens  --  a \textit{dataset poisoning attack}. Specifically, we consider a realistic threat model in which an honest model owner trains an LLM using data scraped from the internet. Despite the owner’s honest intentions, the internet remains a fundamentally untrustworthy environment, where anybody can introduce poisoned data into public sources (for instance, by editing Wikipedia articles or posting on Reddit). Indeed, recent work has shown that such poisoning attacks are not merely hypothetical, but are feasible in practice~\cite{carlini2024poisoning}. Building on this threat model, we consider an adversary who poisons the training\footnote{While our theoretical results are general and apply to both pre-training and fine-tuning, our empirical evaluation focuses on the latter.} dataset of the model with the goal of causing an \MI~test to produce incorrect predictions. Note that this is distinct from traditional poisoning attacks, which typically aim to trigger undesirable behavior in the \textit{model itself} during downstream use (e.g., denial-of-service or jailbreaking~\cite{zhang2024persistent}). In contrast, our attack targets the \MI~test which is a \textit{separate classifier} that operates on the model outputs/loss values.

In a nutshell, we establish that currently \MI~tests are \textit{not} robust to dataset poisoning attacks. To this end, our contributions are two-fold. \vspace{-0.1cm}\begin{enumerate}[label={\arabic*.},leftmargin=*]
    \item First, we theoretically demonstrate the inherent difficulty of designing a robust \MI~test by identifying a \textit{fundamental trade-off} between the test’s accuracy on clean data and its robustness to poisoning.
    \item Second, we provide a concrete instantiation of a novel poisoning attack, \namenospace, that effectively  exploits this trade-off in practice.
 \vspace{-0.1cm}\end{enumerate}
 Our attack works as follows:  for a target point $x_t$ with ground truth membership label $c$, the adversary \textit{substitutes} some points in the training dataset with carefully crafted poisoned ones that (1) preserve the true membership label $c$ under the definition of neighborhood-based membership, but (2) cause the \MI~test to \textit{flip} its prediction to $1 - c$. Consequently, the attack  \textit{discredits} the test's predictions.  
 
 Revisiting our earlier usecases of \MI~tests,  we highlight real-world motivations of such attacks. Consider the case of copyright enforcement. An honest model owner may ensure that their training dataset, under a mutually agreed-upon neighborhood definition, contains no points related to the new Larry Lobster novels. However, a disgruntled author could plant a poison -- outside this neighborhood -- that still triggers a (false) positive prediction, potentially enabling a baseless copyright lawsuit. 
Similarly, in the example of a fairness audit, an adversary could plant poisons within the neighborhood of minority class points, causing the \MI~test to falsely predict non-membership (false negative), thereby undermining the model owner’s credibility.

Intuitively, the attack is possible due to the misalignment between superlevel sets of the \MI~test (points that when trained upon elicit high test scores, i.e., indication of membership) and the existing notions of neighborhood (see Figure~\ref{fig:substitution}). We provide a concrete implementation of the poisoning attack, \namenospace, for four popular notions of neighborhood: $n$-gram overlap, embedding similarity, edit distance, and exact matching (i.e. the traditional notion of membership). \namenospace~is \MI~test agnostic, can target multiple points simultaneously, and highly efficient (only substituting a single clean point with a poison is sufficient to induce false negatives). We evaluate \name against several \MI~tests across different datasets and model sizes, and find that it consistently flips test predictions and degrades performance well below random. 
Thus, our results highlight a disconnect between how \MI~tests operate and how their outputs are interpreted to determine membership in practice, calling for a re-evaluation of what it truly means for a point to be a member.

\begin{figure}[t]
    \centering
    \includegraphics[width=0.9\linewidth]{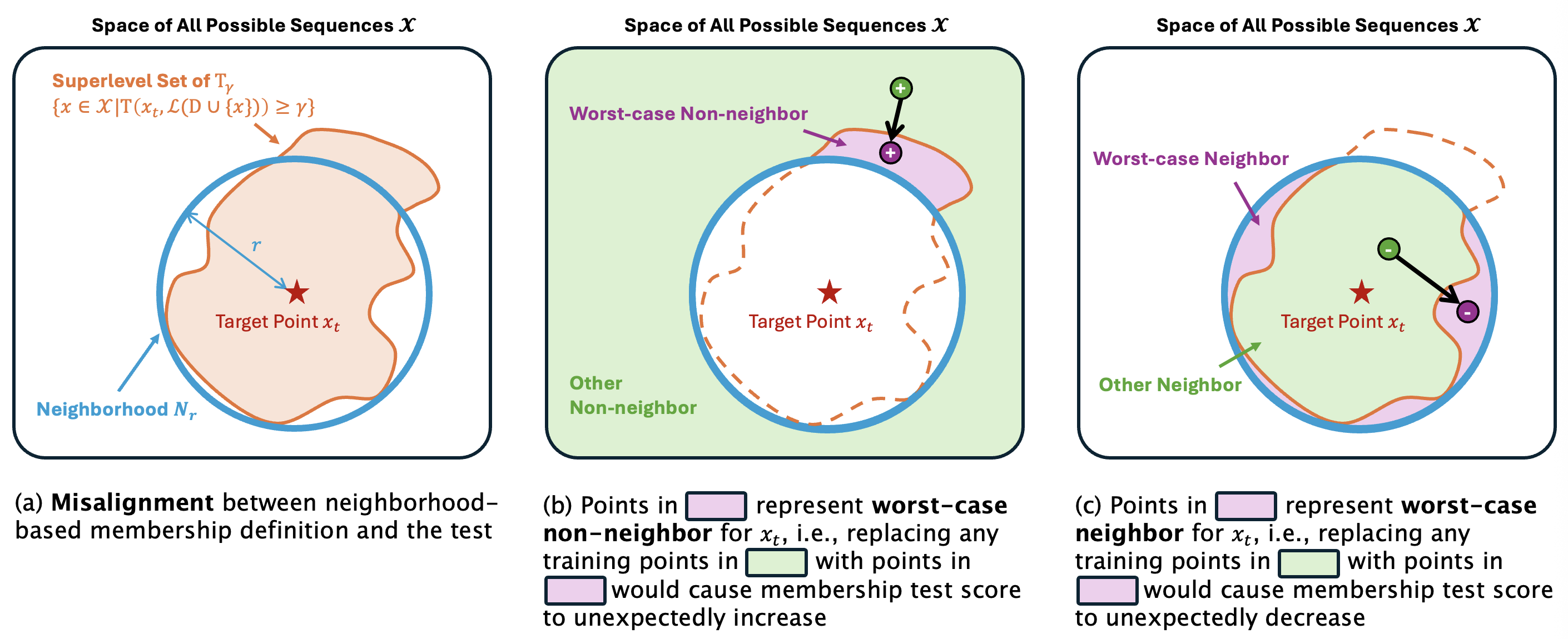}
    \caption{\textit{\MI~tests are not robust to membership invariant perturbations (e.g., substitution).} By leveraging the \textbf{misalignment} between the membership neighborhood and the test’s superlevel set, one can alter a dataset—without changing the ground truth label of a target $x_t$—by substituting non-neighbors with \textbf{worst-case non-neighbors} to cause the test $\MItest_\gamma$ to mispredict $x_t$ as a member. Conversely, replacing a neighbor with a \textbf{worst-case neighbor} can make a true member appear as a non-member.}
    \label{fig:substitution}
\vspace{-4mm}
\end{figure}

\vspace{-2mm}
\section{Background}
\vspace{-1mm}
\label{sec:background_existing_membership_tests}

\MI~tests typically rely on thresholding model loss or its variants, such as LOSS~\cite{yeom2018privacy}, Min-K\%\cite{shidetecting} (loss on least likely tokens), zlib\cite{carlini2021extracting} (loss-to-entropy ratio), perturbation-based tests (loss differences with perturbed inputs~\cite{mattern2023membership}), and reference-based tests (loss ratio to another model~\cite{carlini2021extracting}).\\
\noindent\textbf{Unreliability of Membership Inference.} Recent work has already begun to highlight concerns regarding \MI~testing. For example, many works evaluate performance of tests on datasets that exhibit distribution shifts, which is flawed because members can be separated from non-members without even using the model~\cite{duan2024membership,das2024blind,maini2024llm}. Other work discusses how a dishonest model owner can refute the predictions of an \MI~test by providing a certificate that a model could be obtained without training on a specific point~\cite{amrita_por}. More recent work has also touched on how poisoned models can surprisingly amplify \MI~test results for finetuning data~\cite{wenprivacy}. We differ from these works in that we show how \MI~tests can be manipulated to provide entirely \textit{wrong} predictions.
\\\noindent\textbf{Neighborhood-Based Membership Inference.} The premise of an \MI~test relies upon the definition of membership, which traditionally labels a text sequence as a member if it is \textit{exactly} matches a sequence in the training set. However, this definition poses two key problems. First, training and non-training texts often overlap substantially, making exact-match definitions of membership ambiguous and leading to unreliable performance for \MI~tests~\cite{duan2024membership, zhang2024membership}. For example, suppose copyrighted text is removed from the training set. These tests may still flag them as members simply because the training set includes \textit{related} content (e.g., from online discussions)~\cite{zhang2024membership}. Second, the standard definition of membership, as exact presence in the dataset, is too narrow if it is used to measure whether a model is leaking training data. For example, a privacy leak can happen even if the model outputs a rephrased version of a sensitive medical record. Indeed, recent work has shown that language models can often complete sequences that they were not explicitly trained upon or even share n-grams with~\cite{liu2025language}. To address these concerns, recent works have advocated for a shift towards more flexible, \textit{neighborhood-based} definitions for LLMs where membership is determined by semantic or lexical similarity~\cite{duan2024membership,brown2022does}. The key contribution of this work is to show that membership inference is still unreliable under these relaxed definitions.

\vspace{-2mm}\section{Neighborhood Membership Inference: A General Framework}\label{sec:general_framework_and_defns}
\vspace{-1mm}
\paragraph{Notation.}

Let $\mathcal{X}$ be the space of token sequences over vocabulary $\mathcal{V}$. %
Model parameters $\theta$ of an LLM are learned via algorithm $\mathcal{L}$ on dataset $D \in \mathcal{P}(\mathcal{X})$, i.e., $\theta = \mathcal{L}(D)$, where $\mathcal{P}$ denotes the power set. Let $\boldsymbol{\mathcal{D}}$ denote the  distribution over the datasets. We define the symmetric difference between datasets $D_1$ and $D_2$ as $D_1 \Delta D_2$.%

For a given text sequence $x \in \mathcal{X}$, its \textit{neighborhood} is formally defined as follows:

\begin{definition}[Neighborhood]
Let $d: \mathcal{X} \times \mathcal{X} \to \mathbb{R}_{\ge 0}$ be a distance metric on the space of input sequences $\mathcal{X}$. For a given $x \in \mathcal{X}$ and radius $r \geq 0$, the neighborhood $\mathcal{N}_r : \mathcal{X} \to \mathcal{P}(\mathcal{X})$ defines a ball of radius $r$ centered at $x$ and is given by: $\mathcal{N}_r(x) = \{x' \in \mathcal{X} \mid d(x, x') \le r \}$.
\end{definition}

We will refer to points in $\mathcal{N}_r(x)$ as ``neighbors'' of $x$, and the complement $\overline{\mathcal{N}}_r(x)$ of the neighborhood is then simply all points that are ``non-neighbors'' of $x$, i.e., $\overline{\mathcal{N}}_r(x) = \mathcal{X} \setminus \mathcal{N}_r(x)$. If a model is trained on a point $x$, we would like to treat its neighbors $\mathcal{N}_r(x)$ as approximate members.
Instantiating a neighborhood with a radius of $r=0$ recovers the traditional exact-matching notion of membership. Using this, we denote neighborhood-based membership:
\begin{gather} x \inn D \iff \exists~x' \in D \text{ s.t. } x' \in \mathcal{N}_r(x) \; \text{and} \; 
x \innot D \iff \forall~x' \in D, x' \notin \mathcal{N}_r(x).\end{gather} 

In light of this, the score assigned by an \MI~test to a point $x$ should be interpreted as a signal that at least one of its neighbors was used to train the model instead of an indication that $x$ exactly matches a sequence from the training set. Following the formalism from~\cite{zhang2024membership}, we define an \MI~test as follows:

\begin{definition}[$\gamma$-Thresholded Membership Inference]\label{defn:mi_test}
Given a neighborhood $\mathcal{N}_r$, an  \MI~test is a mapping $\MItest_\gamma : \mathcal{X} \times \Theta \to \mathbb{R}_{\geq 0}$, which, for any data point $x$ and model parameters $\theta$, returns a ``membership score'' for testing the null hypothesis that 
$x \not\inn D$. Scores above a threshold $\gamma \in \mathbb{R}_{\geq 0}$  suggest membership:
$
\mathds{1}[\MItest_\gamma(x, \theta) \geq \gamma] \approx 
x \inn D.$ \vspace{-0.3cm}
\end{definition}

\paragraph{When is a Membership Inference Test Sound?} 
Here, we adapt the standard metrics—sensitivity (true positive rate) and specificity (true negative rate)—to the pointwise setting as follows: 
\begin{definition}[Pointwise Membership Sensitivity (True Positive Rate)]\label{defn:nat_sens}
The sensitivity of an  \MI~test $\MItest_\gamma$ for a point $x$ 
with respect to a neighborhood $\mathcal{N}_r$ is the probability that the test correctly identifies a sequence as a member:
\[
\text{Sens}(\MItest_{\gamma}, x) = \Pr_{\substack{D \sim \boldsymbol{\mathcal{D}} \\ \theta = \mathcal{L}(D)}}\left(\MItest_\gamma(x, \theta) \geq \gamma \mid x \inn D\right).
\]
\end{definition}
\begin{definition}[Pointwise Membership Specificity (True Negative Rate)]\label{defn:nat_spec}
The specificity of an \MI~test $\MItest_\gamma$ for a point $x$ with respect to a neighborhood $\mathcal{N}_r$ is the probability that the test correctly identifies a sequence as a non-member:
\[ \vspace{-0.5cm}
\text{Spec}(\MItest_{\gamma}, x) = \Pr_{\substack{D \sim \boldsymbol{\mathcal{D}} \\ \theta = \mathcal{L}(D)}}\left(\MItest_\gamma(x, \theta) < \gamma \mid x \innot D\right).
\]\vspace{-0.1cm}
\end{definition}

Intuitively, the more separable the score distributions of a membership test for models trained/not-trained on any neighbors, the more powerful the test. The following result connects the specificity and sensitivity to the separability of the membership test scores under both hypotheses:

\begin{lemma}{(Advantage of an \MI~test).}\label{lem:mi_adv} For a point $x$, the advantage of a test $\MItest_\gamma$ is given by the difference between the expected membership scores under the null and the alternative hypotheses:
\small\begin{equation*}
    \underbrace{\int\limits_{\gamma=0}^{\infty} \big(\text{Sens}(\MItest_{\gamma}, x) + \text{Spec}(\MItest_{\gamma}, x) - 1\big)d\gamma}_\text{Advantage over random guess} = 
      \underbrace{\underset{\substack{D \sim \boldsymbol{\mathcal{D}} \\ \theta = \mathcal{L}(D)}}{\mathbb{E}}\left(\MItest_{\gamma}(x, \theta) \mid x \inn D \right)}_\text{Expected score under the alternative} -
    \underbrace{\underset{\substack{D \sim \boldsymbol{\mathcal{D}} \\ \theta = \mathcal{L}(D)}}{\mathbb{E}}\left(\MItest_{\gamma}(x, \theta) \mid x \innot D\right)}_\text{Expected score under the null}.
\end{equation*}
\end{lemma}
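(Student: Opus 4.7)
The plan is to recognize the right-hand side as two applications of the layer-cake (tail-sum) formula for the expectation of a non-negative random variable, after rewriting the specificity as one minus a tail probability.

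First, I would unpack the integrand. By Definition 3.3, $\text{Sens}(\MItest_\gamma, x) = \Pr(\MItest_\gamma(x,\theta) \geq \gamma \mid x \inn D)$, and by Definition 3.4, $\text{Spec}(\MItest_\gamma, x) = \Pr(\MItest_\gamma(x,\theta) < \gamma \mid x \innot D) = 1 - \Pr(\MItest_\gamma(x,\theta) \geq \gamma \mid x \innot D)$. Substituting gives
\[
\text{Sens}(\MItest_\gamma, x) + \text{Spec}(\MItest_\gamma, x) - 1 = \Pr(\MItest_\gamma(x,\theta) \geq \gamma \mid x \inn D) - \Pr(\MItest_\gamma(x,\theta) \geq \gamma \mid x \innot D),
\]
where the probability on both sides is over the joint randomness of $D \sim \boldsymbol{\mathcal{D}}$ and $\theta = \mathcal{L}(D)$.

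Next, I would invoke the layer-cake identity. Because Definition 3.2 specifies that $\MItest_\gamma$ maps into $\mathbb{R}_{\geq 0}$, the score $\MItest_\gamma(x, \theta)$ is a non-negative random variable under either conditioning, so
\[
\mathbb{E}[\MItest_\gamma(x, \theta) \mid \mathcal{E}] = \int_0^\infty \Pr(\MItest_\gamma(x, \theta) \geq \gamma \mid \mathcal{E})\, d\gamma
\]
for each event $\mathcal{E} \in \{x \inn D,\ x \innot D\}$. This is a standard consequence of Fubini's theorem applied to $\mathds{1}[\MItest_\gamma(x,\theta) \geq \gamma]$ on $[0,\infty) \times \Omega$. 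Integrating the pointwise identity above over $\gamma \in [0, \infty)$ and splitting the integral on the right by linearity then yields exactly the two conditional expectations in the claimed equation, completing the proof.

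There is essentially no obstacle here beyond bookkeeping: the only subtlety to flag is the non-negativity of the test score (without which the tail-sum formula would pick up a $-\int_{-\infty}^0 \Pr(\MItest_\gamma < \gamma \mid \cdot)\, d\gamma$ correction), and the implicit mild regularity (finiteness of the conditional expectations, or equivalently absolute integrability) needed to separate the difference of integrals into a difference of expectations. Both are guaranteed by the framework in Section 3.
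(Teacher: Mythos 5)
Your proposal is correct and matches the paper's proof in substance: both rely on the layer-cake (survival-function) identity for non-negative random variables to convert each conditional expectation into an integral of a tail probability, then combine with the definitions of sensitivity and specificity. The only cosmetic difference is direction — you expand the integrand first and then invoke the identity, while the paper starts from the expectations — and the inconsequential use of $\geq$ versus $>$ in the tail probability, which agrees up to a Lebesgue-null set.
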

\normalsize The proof is in Appendix~\ref{proof:natural_separability}. The integrand on the left is commonly known as Youden's J statistic~\cite{youden1950index}, and captures the difference between the true positive rate and false positive rate, i.e., the advantage over a random guess. 
Later in Section~\ref{sec:p3}, we build upon this result to show the difficulty of designing robust \MI~tests. %

\vspace{-0.3cm}
\section{A Robustness Perspective on Membership Inference}\label{sec:robustness_defn_section} \vspace{-0.1cm}We begin by defining a family of \textit{membership-invariant perturbations} to the dataset, i.e., perturbations 
to the dataset that do not change the membership label for some point:

 \begin{definition}[Membership Invariant Dataset Perturbation] Given an original dataset $D \sim \boldsymbol{\mathcal{D}}$, a dataset perturbation operator \texttt{Pert}: $\mathcal{P}(\mathcal{X}) \rightarrow \mathcal{P}(\mathcal{X})$ is membership-invariant w.r.t a point $x$ if:
\[
 x \innot D \implies x \innot \texttt{Pert}(D) \text{ and } x \inn D \implies x \inn \texttt{Pert}(D).\]
\end{definition}

In other words, a point’s membership label should remain \textit{unchanged} under perturbations—members stay members, and non-members stay non-members. Ideally, a membership test should be robust to such perturbations. We focus on a specific type of perturbation via substitution: replacing neighbors (or non-neighbors) of $x$ with other neighbors (or non-neighbors). We denote such a perturbed dataset to lie in the \textit{expansion} of the original dataset $D$.%

\begin{definition}{(Dataset Expansions Under Substitution)}
The $b$-neighborhood expansion of a dataset $D$ around point $x$ (for some notion of neighborhood $\mathcal{N}_r$) is the set of all datasets that can be made by only substituting $b$ points (from $D$) that lie in $\mathcal{N}_r(x)$ with other points from $\mathcal{N}_r(x)$. Similarly, the $b$-non-neighborhood expansion arises by substituting points that lie in $\overline{\mathcal{N}}_r(x)$ with other points in $\overline{\mathcal{N}}_r(x)$\footnote{Without loss of generality, we assume there are $b$ such sequences in $D$ to begin with.}. Concretely, with $S \in \{\mathcal{N}_r(x),\overline{\mathcal{N}}_r(x)\}$, these expansions are given by:
\[
\mathcal{B}_b(D, S) = \{ D' \subseteq \mathcal{X} \mathrel{\big|} |D'| = |D|, D' \setminus S = D \setminus S, |D' \Delta D| \leq 2b \}.
\]
\end{definition}

In this paper, we explore the problem of constructing \textit{worst-case} datasets from the expansion of an original dataset $
D$—that is, perturbed datasets in which 
$x$ remains a non-member (since only non-neighbors were substituted with other non-neighbors), yet the test incorrectly classifies it as a member. Conversely, one can also construct examples where 
$x$ remains a member, but the test incorrectly predicts it to be a non-member.   An illustration of this phenomenon is provided in Figure~\ref{fig:substitution}.

Such worst-case datasets are interesting because they contend with the reliability of a test --- if a test can be arbitrarily made to fail on a point, despite the ground truth (i.e., membership label) being unchanged, is the test still useful? Furthermore, such worst-case datasets have practical, real-world implications under a \textit{poisoning} threat model.

\noindent\textbf{Threat model.}
In our setting,  the adversary can be \textit{anyone} capable of planting poisoned data on the web, such as through poisoning publicly available sources like Wikipedia backups~\cite{carlini2024poisoning}. %

Formally, such scenarios can be characterized as a game between a challenger $\mathcal{C}$, an adversary $\mathcal{A}$, and an arbiter $\mathcal{J}$. Here, the adversary $\mathcal{A}$'s goal is \textit{for the test to assign incorrect membership predictions}:
\begin{enumerate}[label={\arabic*.},leftmargin=*]
    \item Adversary $\mathcal{A}$ chooses a
    target point $x_t \in \mathcal{X}$ and sends $x_t$ to the challenger $\mathcal{C}$.
    \item Challenger samples the training set $D_I$ such that $x_t \inn D_I$, and $D_O$ such that $x_t \innot D_O$, and sends $(D_I, D_O)$ to adversary $\mathcal{A}$.
    \item Adversary $\mathcal{A}$ poisons $D_I$ with budget $b$ to obtain $D_I^p \in \mathcal{B}_b(D_I, \mathcal{N}_r(x_t))$, and poisons $D_O$ to obtain $D_O^p \in \mathcal{B}_b(D_I, \overline{\mathcal{N}}_r(x_t))$. Poisoned datasets $(D_I^p, D_O^p)$ are sent back to challenger $\mathcal{C}$.\footnote{For generality, we allow the adversary to poison both datasets. However, the adversary could just as well poison only one of the datasets without altering rest of the game.}
    \item Challenger $\mathcal{C}$ flips a random bit $c$ and trains model parameters as $\theta = \mathcal{L}(D_I^p)$ if $c=0$ and $\theta = \mathcal{L}(D_O^p)$ if otherwise. Challenger then sends $(x_t,\theta)$ to arbiter $\mathcal{J}$.
    \item Arbiter $\mathcal{J}$ leverages a membership test to output a prediction bit as $\hat{c}= \MItest_{\gamma}(x_t,\theta)$.
    \item Adversary $\mathcal{A}$ wins if $\hat{c} \neq c$. 
\end{enumerate}
Note that, in a departure from typical security games, we introduce an additional arbiter $\mathcal{J}$ to capture the fact that the goal of the poisoning is to mislead the membership test as evaluated by an arbitrary third-party. This is also reflected in the real-world motivating examples discussed earlier, where the arbiter may be a judge ruling on a copyright violation case or an auditor assessing a model’s fairness.

In order to  characterize the success of such an adversary, we extend the performance metrics from Definitions~\ref{defn:nat_sens} and~\ref{defn:nat_spec} to account for the effects of poisoning: 
\begin{definition}[Pointwise Robustness] The robust sensitivity of a  test $\MItest_\gamma$ for a point $x$ w.r.t neighborhood $\mathcal{N}_r$ is defined as the probability that, after substituting $b$ neighbors with their worst-case neighbors, the test still correctly classifies $x$ as a member:
\small\begin{equation}
\text{RSens}_b(\MItest_\gamma, x) = \Pr_{\substack{D \sim \mathcal{D}}} \left( \left[ \min_{\substack{
        D' \in \mathcal{B}_b({D,\mathcal{N}_r(x)}) \\
        \theta = \mathcal{L}(D')
    }}  \MItest_\gamma(x, \theta) \right] \geq \gamma \;\middle|\; x \inn D \right). \label{eq:robust sens}
\vspace{-0.3cm}\end{equation}
\normalsize Similarly, for robust specificity:
\small\begin{equation}
\text{RSpec}_b(\MItest_\gamma, x) = 
\Pr_{\substack{D \sim \mathcal{D}}} \left(
    \left[ \max_{\substack{
        D' \in \mathcal{B}_b(D, {\overline{\mathcal{N}}_r(x)}) \\
        \theta = \mathcal{L}(D')
    }} 
    \MItest_\gamma(x, \theta) \right] < \gamma 
    \;\middle|\; 
    x \innot D
\right). \label{eq:robust spec}
\end{equation}
\end{definition}
\normalsize In Equation~\ref{eq:robust sens} above, the inner minimum represents the adversary's replacement of the original dataset $D$ (of which $x$ is a member: $x \inn D$) with a carefully chosen dataset $D'$ such that it decreases the membership signal for $x$ in the trained model, while maintaining $x$'s member status, i.e., $x \inn D'$. Concretely, this dataset is obtained by substituting $b$ neighbors of $x$ with other  carefully selected neighbors of $x$ such that the test's output score is minimized. Similar observation holds true for Equation~\ref{eq:robust spec}, except we are now trying to maximize the test score. 
\\\noindent\textbf{Note}. The above measures of robustness are similar in spirit to those employed for the widely studied concept of adversarial robustness~\cite{madry2018towards} (e.g., robust accuracy). However, our setting introduces a key distinction: worst-case analysis is instead performed over perturbations of the \textit{training dataset}, rather than perturbations of the individual point, i.e., $b$-expansions of $D$ instead of $l_p$ norm of $x$.

\section{\namenospace: Poisoning 
Membership Inference}\label{sec:p3}
In this section, we propose \namenospace, a concrete instantiation of a dataset poisoning attack on \MI~tests. We begin with an overview and follow with implementation details.

\noindent\textbf{Overview.}
To construct a poisoned dataset for a target point $x_t$, the adversary $\mathcal{A}$ must provide:
\[
x_\texttt{poison} \in\Big \{\argmax\limits_{\substack{
        D' \in \mathcal{B}_b({D,\overline{\mathcal{N}}_r(x_t)}) \\
        \theta = \mathcal{L}(D')
    }} 
    \MItest_\gamma(x_t, \theta) ,  \argmin\limits_{\substack{
        D' \in \mathcal{B}_b({D,\mathcal{N}_r(x_t)}) \\
        \theta = \mathcal{L}(D')
    }} 
    \MItest_\gamma(x_t, \theta)
\Big\}.\]
\normalsize W.l.o.g, let's consider the case where the adversary wants to induce a false positive, i.e., the target $x_t$ is originally not a member.  The goal is to substitute “clean” non-neighbors of $x_t$ with worst-case, i.e., “poisoned” non-neighbors such that the membership test $\MItest_\gamma$ incorrectly assigns high membership scores to $x_t$ (see Figure \ref{fig:substitution}). The key insight of \namenospace~is that, to find such a poisoned non-neighbor, one can (1)  sample an actual neighbor $x_{\texttt{sample}}$, and then (2)  ``map'' this neighbor back to a non-neighbor that is \textit{$\MItest_\gamma$-equivalent to $x_{\texttt{sample}}$}—meaning that training on either point causes $\MItest_\gamma$ to assign the same score to $x_t$.
This mapped non-neighbor  $x_{\texttt{poison}}$ is thus the poison. If a model is trained on $x_{\texttt{poison}}$, the \MI~test should ideally produce the same output on $x_t$ as it would if $x_{\texttt{sample}}$ had been in the training set instead.
The success of \namenospace~can be formalized as follows: 
\small\begin{equation}\label{eqn:mirror_obj}
    \texttt{PoisonM}_{(x_t,D)}(x_{\texttt{sample}}, S) = \underbrace{\argmin_{\substack{x_\texttt{poison} \in S}} |\MItest_\gamma(x_t, \mathcal{L}(D \cup\{x_\texttt{poison}\})) -  \MItest_\gamma(x_t, \mathcal{L}(D \cup\{x_{\texttt{sample}}\})|}_\text{Denoted by $\delta_{x_{\texttt{sample}}}^{(x_t,D)}$},
\end{equation}\normalsize
where $S \in \{ \overline{\mathcal{N}}_r(x_t), \mathcal{N}_r(x_t)\}$ is the domain of the mapping in which the poison should lie, and $\delta_{x_{\texttt{sample}}}^{(x_t,D)}$ is the mapping error, i.e.,  how much the poison differs in $\MItest_\gamma$'s score for $x_t$ as compared to $x_{\texttt{sample}}$. Extending this to find $b$ poisons, we define:
\small\begin{align*}\label{eqn:mirror_obj_b}
    \texttt{PoisonM}^b_{(x_t,D)} &(\{x_1,...,x_b\}_{\texttt{sample}}, S) = \\& \argmin_{\substack{(x_1,...,x_b)_{\texttt{poison}} \in S}} |\MItest(x_t, \mathcal{L}(D \cup\{x_1,...,x_b\}_{\texttt{poison}})) -  \MItest(x_t, \mathcal{L}(D \cup\{x_1,...,x_b\}_{\texttt{sample}})|.
\end{align*}
\normalsize and the mapping error is given by $\delta_{(x_1,...,x_b)_{\texttt{sample}}}^{(x_t,D)}$. Here $b$ is referred to as the \textit{budget} of the attack. 
\\We now provide a result that demonstrates the difficulty of obtaining a robust \MI~test.

\begin{theorem}{(Tradeoff of Membership Inference Under \normalfont{\namenospace}).}\label{thm:odds} Let $x_t$ be a target point. The advantages  of a test $\MItest_\gamma$ with and without poisoning are at odds with each other:
\small\begin{align*}
\underbrace{\int\limits_{\gamma=0}^{\infty} \big(\text{RSens}_{b_1}(\MItest_\gamma, x_t) +\text{RSpec}_{b_2}(\MItest_\gamma, x_t) - 1\big)d\gamma}_\text{Advantage with poisoning (robustness)} + \underbrace{\int\limits_{\gamma=0}^{\infty} \big(\text{Sens}(\MItest_\gamma, x_t) +\text{Spec}(\MItest_\gamma, x_t) - 1\big)d\gamma}_\text{Advantage without poisoning} \leq \delta
^*,\end{align*}
\normalsize where $\delta^* = \underbrace{\underset{\substack{D \sim \boldsymbol{\mathcal{D}} \\ (x_1,\cdots,x_{b_1}) \sim D~\cap~\mathcal{N}_r(x_t)\\(x_1,\cdots,x_{b_1})_{\texttt{sample}} \sim \overline{\mathcal{N}}_r(x_t)\\D' = D \setminus\{x_1,\cdots,x_{b_1}\}\\ b_1=|D\cap \mathcal{N}_r(x_t)|}}{\mathbb{E}}\; \delta_{(x_1,\cdots,x_{b_1})_{\texttt{sample}}}^{(x_t,D')}}_\text{Expected mapping error for poisoned neighbors} + \underbrace{\underset{\substack{D \sim \boldsymbol{\mathcal{D}} \\ (x_1,\cdots,x_{b_2}) \sim D~\cap~\overline{\mathcal{N}}_r(x_t)\\(x_1,\cdots,x_{b_2})_{\texttt{sample}} \sim \mathcal{N}_r(x_t)\\D' = D \setminus\{x_1,...,x_{b_2}\}\\b_2=|v\cap \mathcal{N}_r(x_t)|, v \sim \boldsymbol{\mathcal{D}}}}{\mathbb{E}}\; \delta_{(x_1,\cdots,x_{b_2})_{\texttt{sample}}}^{(x_t,D')}}_\text{Expected mapping error for poisoned non-neighbors}$.
\end{theorem}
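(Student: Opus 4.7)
The plan is to reduce both integrals to expected score differences via the layer-cake identity, exactly as in Lemma~\ref{lem:mi_adv}, and then control the adversarial min/max expectations by exhibiting an explicit \namenospace~strategy rather than solving the combinatorial optimization exactly. Let $m(D) := \MItest_\gamma(x_t, \mathcal{L}(D))$. The clean-data integral equals $A - B$ with $A := \mathbb{E}[m(D) \mid x_t \inn D]$ and $B := \mathbb{E}[m(D) \mid x_t \innot D]$ by Lemma~\ref{lem:mi_adv}. Since $\min_{D'\in \mathcal{B}_{b_1}} m(D')$ and $\max_{D'\in \mathcal{B}_{b_2}} m(D')$ are themselves non-negative random variables in $D$, the same layer-cake argument gives
\begin{equation*}
\int_0^\infty \!\!\big(\text{RSens}_{b_1}(\MItest_\gamma,x_t) + \text{RSpec}_{b_2}(\MItest_\gamma,x_t) - 1\big)\,d\gamma \;=\; \mathbb{E}\!\left[\min_{D'} m(D') \,\Big|\, x_t \inn D\right] - \mathbb{E}\!\left[\max_{D'} m(D') \,\Big|\, x_t \innot D\right].
\end{equation*}

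Next I upper-bound the min term by plugging in the \namenospace~dataset rather than the true minimizer. Fix $D$ with $x_t \inn D$, enumerate its $b_1 = |D \cap \mathcal{N}_r(x_t)|$ neighbors of $x_t$ as $(x_1,\ldots,x_{b_1})$, draw non-neighbor samples $(x_1,\ldots,x_{b_1})_{\texttt{sample}} \sim \overline{\mathcal{N}}_r(x_t)$, and let $D^p$ be the \namenospace~output substituting these neighbors with poisoned neighbors that mimic those samples. Then $D^p \in \mathcal{B}_{b_1}(D, \mathcal{N}_r(x_t))$, so $\min_{D'} m(D') \leq m(D^p)$, and by the defining objective of \namenospace~(Eq.~\ref{eqn:mirror_obj}), $m(D^p) \leq m(D^s) + \delta^{(x_t,D')}_{(x_1,\ldots,x_{b_1})_{\texttt{sample}}}$, where $D^s := D' \cup \{(x_1,\ldots,x_{b_1})_{\texttt{sample}}\}$ and $D' = D \setminus \{x_1,\ldots,x_{b_1}\}$. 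Because \emph{every} neighbor of $x_t$ in $D$ has been replaced with a non-neighbor, $x_t \innot D^s$. Taking expectations over $D$ and the sample draws yields $\mathbb{E}[\min_{D'} m(D') \mid x_t \inn D] \leq \mathbb{E}[m(D^s)] + \delta^*_{\text{neighbor}}$, where $\delta^*_{\text{neighbor}}$ is the first summand of $\delta^*$. A fully symmetric argument on the non-member side, using the randomized budget $b_2 = |v \cap \mathcal{N}_r(x_t)|$ with $v \sim \boldsymbol{\mathcal{D}}$ fresh, gives $\mathbb{E}[\max_{D'} m(D') \mid x_t \innot D] \geq \mathbb{E}[m(D^s)] - \delta^*_{\text{non-neighbor}}$, where now $x_t \inn D^s$.

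To close the argument, I invoke the distributional identification encoded by the specific $b_1, b_2$ in $\delta^*$: the sample-substituted $D^s$ in the first (resp.\ second) bound has the same marginal law as a fresh draw from $\boldsymbol{\mathcal{D}}$ conditioned on $x_t \innot D$ (resp.\ $x_t \inn D$). Under this coupling, the first $\mathbb{E}[m(D^s)]$ equals $B$ and the second equals $A$. Subtracting the two bounds,
\begin{equation*}
\underbrace{\mathbb{E}\!\left[\min_{D'} m(D') \,\Big|\, x_t \inn D\right] - \mathbb{E}\!\left[\max_{D'} m(D') \,\Big|\, x_t \innot D\right]}_{\text{robust advantage}} \;\leq\; (B + \delta^*_{\text{neighbor}}) - (A - \delta^*_{\text{non-neighbor}}) \;=\; -(A-B) + \delta^*,
\end{equation*}
and rearranging gives (robust advantage) $+$ (clean advantage) $\leq \delta^*$.

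The main obstacle will be the distributional identification in the last step—formally coupling the sample-substituted $D^s$ with a fresh draw from $\boldsymbol{\mathcal{D}}$ of opposite membership status. The choices $b_1 = |D \cap \mathcal{N}_r(x_t)|$ (remove every neighbor) and $b_2 = |v \cap \mathcal{N}_r(x_t)|$ for independent $v$ (add a distributionally-correct count of fresh neighbors) are engineered exactly for this coupling to go through, but making it rigorous likely needs a mild exchangeability assumption on how $\boldsymbol{\mathcal{D}}$ treats the entries of $D$. A secondary subtlety is that the ``fixed'' subscripts $b_1, b_2$ on RSens and RSpec are in fact $D$-dependent random variables in the $\delta^*$ expression, so the bound must be read pointwise in the realization of $D$ and then averaged.
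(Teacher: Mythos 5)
Your proposal is correct and follows essentially the same route as the paper's proof: both convert the clean and robust advantages to expected-score differences via the survival-function (layer-cake) identity, bound the adversarial $\min$/$\max$ by the \namenospace-constructed dataset plus the mapping error $\delta$ (the paper phrases this through Jensen's inequality in Lemmas~\ref{proof:upmi}--\ref{proof:lbmi}, you do it pointwise, which is equivalent), and then use the choices $b_1 = |D \cap \mathcal{N}_r(x_t)|$ and $b_2 = |v \cap \mathcal{N}_r(x_t)|$ to identify the fully-substituted dataset's law with a fresh draw of opposite membership status. The distributional identification you flag as the delicate step is likewise simply asserted in the paper's proof of Theorem~\ref{thm:odds}, so your treatment matches the paper's level of rigor.
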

\normalsize The proof is in Appendix~\ref{proof:odds}. The R.H.S represents the expected mapping error $\delta^*$, while the  L.H.S captures the total advantage (over random guessing) of the \MI~test, both in the presence and absence of poisoning.   When \namenospace’s mapping error $\delta^*$ is small—i.e., the attack is successful--the theorem above implies a surprising insight: the advantage of the membership test can be turned against itself. Specfically, the better the test performs (as measured by Youden’s J statistic) on clean points, the \textit{more} vulnerable it becomes to our poisoning attack. To build intuition, consider a scenario where the adversary aims to induce a false negative by constructing a poisoned neighbor. In the ideal case where $\delta^* = 0$, the poisoned neighbor has the same effect as a clean non-neighbor to $\MItest_\gamma$.
As a result, $\MItest_\gamma$ assigns to $x_t$ the same (low) score as it would assign if trained on the clean non-neighbor. This effectively fools $\MItest_\gamma$ into making an incorrect prediction, exploiting its own strength in distinguishing members from non-members. 
Thus, the above result delineates a \textit{fundamental trade-off} for an \MI~test: strong performance on clean data comes at the cost of robustness to poisoning attacks. This is also empirically validated in Section \ref{sec:eval}.
  
A natural question arises: for a given $x_t \in \mathcal{X}$, do such low-error poisons actually exist? In practice, they often do—this stems from a \textit{misalignment} between the ``balls'' defined by the generic notions of neighborhood, and the actual superlevel sets of the test, which define regions that trigger high membership scores for $x_t$. This is illustrated in Figure~\ref{fig:substitution} --- poisons exist in the small gaps between the contours of the test's superlevel sets and the actual neighborhood boundary. This phenomenon is reminiscent of adversarial examples in classification, where there is a well known gap between $l_p$ balls and the superlevel sets of the classifiers (or decision boundaries)~\cite{szegedy2013intriguing}.

\noindent\textbf{Implementing \namenospace~for Different Neighborhoods.} \namenospace~involves solving the discrete optimization in Equation~\ref{eqn:mirror_obj}, tailored to the chosen neighborhood. Our general method, outlined in Algorithm~\ref{alg:pmp}, follows a greedy coordinate descent approach inspired by Zou et al.\cite{zou2023universal}. Given a target $x_t$, we sample a neighbor or non-neighbor, then iteratively (1) select a random token and (2) replace it with one that minimizes a neighborhood-specific poisoning loss. For poisoned non-neighbors, we maximize distance while preserving model activations to mimic the sampled point’s influence on $x_t$. For poisoned neighbors, we minimize the neighborhood distance and stop once the point qualifies as a neighbor.  The losses for four popular choices of neighborhood are in Table~\ref{table:pmp_losses}. Notably, \namenospace~is \textit{\MI-test agnostic}—given a neighborhood definition, a single poisoning strategy is effective across all evaluated \MI~tests.

\begin{table}[t]
\footnotesize
\caption{Details of the \namenospace~attack for different definitions of neighborhood ($f_{\theta}$ represents the LLM).}
\label{table:pmp_losses}
\begin{center}
\resizebox{\linewidth}{!}{\begin{tabular}{lp{6cm}p{5cm}p{7cm}}
\toprule
\textbf{Distance Metric} & \textbf{\makecell{\\Definition} }&\multicolumn{2}{c}{\hspace{-4.5cm}\textbf{\name~Loss}} \\
\cmidrule{3-4}
\textbf{ $\hspace{0.8cm}d$} & & \textbf{Poisoned Neighbor} & \textbf{Poisoned Non-Neighbor}\\\midrule \\
\begin{ttfamily}
\fontseries{b}\selectfont \textbf{n-gram}
\end{ttfamily} & Neighbors share a common 
$n$-gram & $-$ n-gram$(x_{\texttt{poison}}, x_t)$ & $-\dfrac{f_\theta(x_{\texttt{poison}})\cdot f_\theta(x_t)}{||x_{\texttt{poison}}||||x_t||} +$    $\lambda \cdot$ n-gram$(x_{\texttt{poison}}, x_t)$
\\
\begin{ttfamily}
\fontseries{b}\selectfont Embedding
\end{ttfamily}  & Neighbors have cosine similarity $\geq c$ under a semantic embedding function $E$ & $-\dfrac{E(x_{\texttt{poison}})\cdot E(x_t)}{||E(x_{\texttt{poison}})||||E(x_t)||}$ & $-\dfrac{f_\theta(x_{\texttt{poison}})\cdot f_\theta(x_t)}{||x_{\texttt{poison}}||||x_t||} +\lambda\cdot\dfrac{E(x_{\texttt{poison}})\cdot E(x_t)}{||E(x_{\texttt{poison}})||||E(x_t)||}$
\\
\begin{ttfamily}
\fontseries{b}\selectfont Edit Distance
\end{ttfamily}  & Neighbors have normalized edit distance $\leq l$ & edit$(x_{\texttt{poison}}, x_t)$ & $-\dfrac{f_\theta(x_{\texttt{poison}})\cdot f_\theta(x_t)}{||x_{\texttt{poison}}||||x_t||} -\lambda \cdot$ edit$(x_{\texttt{poison}}, x_t)$
\\
\begin{ttfamily}
\fontseries{b}\selectfont Exact Match
\end{ttfamily} & Only the point itself is considered a neighbor & N/A since neighborhood radius is $0$ & $-\dfrac{f_\theta(x_{\texttt{poison}})\cdot f_\theta(x_t)}{||x_{\texttt{poison}}||||x_t||}$ \\ 
\bottomrule
\end{tabular}}
\end{center}
\vspace{-2mm}
\end{table}

\begin{algorithm}[t]
\caption{\namenospace~Attack}\label{alg:pmp}
\scriptsize
\begin{algorithmic}[1]
\State \textbf{Input:} Target point $x_t$, Neighborhood $\mathcal{N}_r$, Pretrained model $\theta$; %
\State \textbf{Output:} Poison point $x_\texttt{poison}$;
\State $S =\overline{\mathcal{N}}_r(x_t)$ (Flipping a member to a non-Member) OR $S = \mathcal{N}_r(x_t)$ (Flipping a non-Member to 
 a member)  
\State $x_{\texttt{sample}} \sim S$
\State $x_{\texttt{poison}} \leftarrow x_{\texttt{sample}}$
\While{$x_{\texttt{poison}} \in S$} 
    \State $i \sim$ \texttt{Uniform}($\{1,~\cdots~,|(x_{\texttt{poison}}|\}$)
    \State $ \texttt{Substitute}_i(x_{\texttt{poison}}, \texttt{min}(\texttt{Loss}_{S}(i,x_{\texttt{poison}},\theta,x_t)))$ %
    \textcolor{blue}{$\rhd$}Substitute $i_{th}$ token to minimize loss
\EndWhile 
\State \Return $x_\texttt{poison}$
\end{algorithmic}
\end{algorithm}

\vspace{-2mm}
\section{Evaluation}\label{sec:eval}
\vspace{-2mm}
\subsection{Experimental Setup}\label{sec:exp_setup}
\vspace{-1mm}
\noindent\textbf{Models and Training.}
We use the Pythia models~\cite{biderman2023pythia}, primarily the 6.9B variant, with ablations on 2.7B and 12B. All models are fine-tuned (for 1 epoch) on poisoned data using AdamW (lr = 2e-5, batch size = 16).
We focus on finetuning setting, i.e., the adversary poisons the finetuning dataset.  %
\\\noindent\textbf{Datasets.} %
Following~\cite{ye2022EnhancedMIA}, the model is finetuned on a mixture of a ``canary'' and a ``background'' dataset, where we will run membership inference on the canaries. We use Wikitext-103 as background and AI4Privacy/AGNews as canary datasets, injecting 500 canaries into 100K background points and holding out another 500 canaries for evaluation. Membership labels are assigned based on a neighborhood definition: although only 500 canaries are in the training set, points from the hold-out dataset may also be considered members if they have neighbors in the training dataset. For each definition of neighborhood, we construct a single poisoned dataset in which we generate poison neighbors with budget $b_1 = 1$ for members, and poison non-neighbors with $b_2 = 10$ for the rest. The resulting model should flip membership status—predicting members as non-members and vice versa.
\\\noindent\textbf{Metrics.} We select 5 popular tests: LOSS~\cite{yeom2018privacy}, Min-K\% Prob~\cite{shidetecting} with $K=0.2$, zlib~\cite{carlini2021extracting}, perturbation-based~\cite{galli-etal-2024-noisy}, and reference-based~\cite{carlini2021extracting}. For perturbation and reference-based tests, we evaluate all configurations from Maini et al.~\cite{maini2024llm} and present the setting that performs the best for membership inference for each neighborhood definition. Metrics include AUC and TPR@1\% FPR. We also evaluate dataset inference~\cite{maini2024llm}, a test providing $p$-values for aggregated membership prediction. We use 4 neighborhood definitions, with fixed parameters: $k=7$ for $n$-grams, $c=0.9$ for cosine similarity~\footnote{Embeddings are computed using Microsoft's Multilingual E5 Large text embedding model~\cite{wang2024multilingual}, which currently ranks highly on the Massive Text Embedding Benchmark~\cite{muennighoff2023mteb}.}, $l = 0.48$ for normalized edit distance, and exact match. %
\\\noindent\textbf{Baselines.}
As a baseline for generating poisoned non-neighbors, we adapt Liu et al's~\cite{liu2025language} recent work on three techniques for forcing completion on sequences not trained upon. These are: (a) dropping tokens at regular intervals, (b) flipping the case of characters probabilistically, and (c) inserting chunks into a sequence of random tokens (also similar to~\cite{poisonedparrot}). To maximize the performance of baselines, we further perform a hyperparameter search for each approach to select the least destructive parameter values (e.g., drop rate, flipping probability, chunk size) that still maintain non-neighbor status.
For
generating poisoned neighbors, to the best of our knowledge there are no existing baselines.
\begin{table}[t]
\tiny
\caption{Natural and robust AUC scores of \MI~tests on the AI4Privacy and AGNews canary datasets.}
\label{table:mia_auc_all_ds}
\begin{center}
\begin{tabular}{ll|c|c|c|c|c|c|c|c|c|c}
\toprule
\multicolumn{1}{c}{\multirow{2}{*}{\makecell{\\$\mathcal{N}_r$}}} & \multicolumn{1}{c}{\multirow{2}{*}{\makecell{\\MI\\Test}}} & \multicolumn{5}{c|}{AI4Privacy} & \multicolumn{5}{c}{AGNews} \\
 \cmidrule{3-12}
& & Natural & \makecell{Token \\Dropouts} & \makecell{Casing \\ Flips} &  Chunk & \name & Natural & \makecell{Token \\Dropouts} & \makecell{Casing \\ Flips} &  Chunk & \name \\

\midrule
\multirow{5}{*}{7-gram} & LOSS & 0.587 & 0.380 & 0.451 & 0.570 & \textbf{0.252} & 0.617 & 0.345 & 0.393 & 0.621 & \textbf{0.208}\\
 & kmin & 0.561 & 0.471 & 0.496 & 0.556 & \textbf{0.408} & 0.574 & 0.478 & 0.489 & 0.574 & \textbf{0.417} \\
 & zlib & 0.564 & 0.453 & 0.490 & 0.557 & \textbf{0.375} & 0.585 & 0.391 & 0.428 & 0.589 & \textbf{0.300} \\
 & perturb & 0.600 & 0.420 & 0.547 & 0.586 & \textbf{0.274} & 0.625 & 0.374 & 0.468 & 0.635 & \textbf{0.243} \\
 & reference & 0.647 & 0.250 & 0.398 & 0.618 & \textbf{0.089} & 0.623 & 0.325 & 0.373 & 0.637 & \textbf{0.174} \\
 \midrule

\multirow{5}{*}{Exact Match} & LOSS & 0.548 & 0.087 & 0.075 & 0.072 & \textbf{0.043} & 0.577 & 0.063 & 0.067 & 0.064 & \textbf{0.036} \\
 & kmin & 0.516 & 0.280 & 0.279 & 0.270 &  \textbf{0.233} & 0.529 & 0.340 & 0.345 & 0.351 &  \textbf{0.337} \\
 & zlib & 0.521 & 0.186 & 0.168 & 0.169 & \textbf{0.115} & 0.525 & 0.104 & 0.108 & 0.109 & \textbf{0.069} \\
 & perturb & 0.570 & 0.098 & 0.098 & 0.098 & \textbf{0.059} & 0.585 & 0.060 & 0.068 & 0.058 & \textbf{0.031} \\
 & reference & 0.625 & 0.044 & 0.036 & 0.037 & \textbf{0.022} & 0.614 & 0.047 & 0.051 & 0.051 & \textbf{0.026} \\
 \midrule

 \multirow{5}{*}{Embedding} & LOSS & 0.552 & 0.456 & 0.560 & 0.454 & \textbf{0.390} & 0.582 & 0.492 & 0.602 & 0.491 & \textbf{0.371} \\
 & kmin & 0.512 & 0.485 & 0.517 & 0.462 & \textbf{0.454} & 0.559 & 0.509 & 0.564 & 0.519 & \textbf{0.427} \\
 & zlib & 0.562  & 0.513 & 0.567 & 0.513 &  \textbf{0.484} & 0.542  & 0.481 & 0.556 & 0.481 &  \textbf{0.402} \\
 & perturb & 0.586  & 0.510 & 0.596 & 0.535 &  \textbf{0.424} & 0.589  & 0.503 & 0.608 & 0.479 &  \textbf{0.378} \\
 & reference & 0.632  & 0.422 & 0.649 & 0.428 &  \textbf{0.281} & 0.645  & 0.534 & 0.662 & 0.532 &  \textbf{0.403} \\ 

 \midrule

 \multirow{5}{*}{Edit Distance} & LOSS  & 0.572 & 0.549 & 0.523 & 0.478 & \textbf{0.430} & 0.592 & 0.587 & 0.576 & 0.495 & \textbf{0.389} \\
 & kmin  & 0.542 & 0.536 & 0.522 & \textbf{0.496} & 0.502 & 0.540 & 0.533 & 0.514 & 0.490 & \textbf{0.432} \\
 & zlib  & 0.539 & 0.529 & 0.517 & 0.492 & \textbf{0.470} & 0.542 & 0.538 & 0.529 & 0.474 & \textbf{0.411}  \\
 & perturb  & 0.590 & 0.570 & 0.612 & 0.492 & \textbf{0.447} & 0.595 & 0.600 & 0.596 & 0.523 & \textbf{0.418}  \\ 
 & reference  & 0.642 & 0.594 & 0.539 & 0.446 & \textbf{0.337} & 0.619 & 0.614 & 0.603 & 0.506 & \textbf{0.381}  \\

\bottomrule
\end{tabular}
\end{center}
\vspace{-4mm}
\end{table}
\begin{table}[t]
\scriptsize
\caption{Natural and robust $p$-values of dataset inference on the AI4Privacy dataset. M($\downarrow$) and NM($\uparrow$) indicates that the test should be outputting low $p$-values for members and high $p$-values for non-members; successful poisoning should instead elicit high $p$-values for members and low $p$-values for non-members.}
\label{table:di_ai4privacy}
\begin{center}
\resizebox{0.8\linewidth}{!}{
\begin{tabular}{l|cc|cc|cc|cc|cc}
\toprule
Nr & \multicolumn{2}{c}{Natural} & \multicolumn{2}{c}{Token Dropouts} & \multicolumn{2}{c}{Casing Flips} &  \multicolumn{2}{c}{Chunking} & \multicolumn{2}{c}{\name} \\
& M ($\downarrow$) & NM ($\uparrow$) & M ($\downarrow$) & NM ($\uparrow$) & M ($\downarrow$) & NM ($\uparrow$) & M ($\downarrow$) & NM ($\uparrow$) & M ($\downarrow$) & NM ($\uparrow$)\\
\midrule
7-gram & 0.007 & 0.952 & 0.197 & \textbf{<1e-3} & 0.019 & 0.003 & 0.003 & 0.397 & \textbf{0.999} & \textbf{<1e-3} \\
exact match & \textbf{0.129} & 0.999 & <1e-3 & \textbf{<1e-3} & <1e-3 & \textbf{<1e-3} & <1e-3 & \textbf{<1e-3} & <1e-3 & \textbf{<1e-3} \\
embedding & 0.031 & 0.995 & 0.227 & 0.516 & 0.003 & 0.996 & 0.007 & 0.276 & \textbf{0.967} & \textbf{0.072} \\

\bottomrule
\end{tabular}
}
\end{center}
\vspace{-3mm}
\end{table}
\vspace{-0.3cm}
\subsection{Experimental Results}
\pgfplotsset{compat=1.18} %
\usepgfplotslibrary{groupplots}


\begin{figure}[t]
\centering %
\begin{tikzpicture}
    \small %
    \begin{groupplot}[
        group style={
            group size=5 by 1, %
            group name=myLinePlots,
            x descriptions at=edge bottom, %
            y descriptions at=edge left,   %
            horizontal sep=18pt,           %
            vertical sep=40pt              %
        },
        ylabel={TPR},
        y label style={at={(axis description cs:-0.25,.5)}}, 
        xlabel={FPR}, %
        x label style={at={(axis description cs:0.5,-0.15)}},
        ymin=0, ymax=1,
        xmin=0, xmax=1,
        ymajorgrids=true,
        xmajorgrids=true,
        grid style=dashed,
        width=0.260\textwidth, %
        height=3.5cm,          %
        legend cell align={center}, %
        title style={at={(0.5,0.92)}, anchor=south} %
    ]
    \nextgroupplot[
        title={LOSS}, 
        legend style={ %
            at={(3.1,1.15)},  %
            anchor=south,     %
            legend columns=-1,%
            draw=none,        %
            fill=none         %
        }
    ];

    \addplot +[color=gray, dotted, line width=1pt, samples=2, forget plot] {x};
    
      \addplot +[color=blue, line width=1pt, mark=*, mark size=0.1pt] 
        table [x=fpr, y=tpr, col sep=comma] {figs/data/standard_ngram_k=7_ppl_tpr_fpr.csv};
      \addlegendentry{Standard}

      \addplot +[color=brown, line width=1pt, mark=square*, mark size=0.1pt] %
        table [x=fpr, y=tpr, col sep=comma] {figs/data/gdm_token_dropouts_ngram_k=7_ngram_k=7_ppl_tpr_fpr.csv};
      \addlegendentry{Token Dropouts}

      \addplot +[color=green!80!black, line width=1pt, mark=square*, mark size=0.1pt] %
        table [x=fpr, y=tpr, col sep=comma] {figs/data/gdm_casing_flips_ngram_k=7_ngram_k=7_ppl_tpr_fpr.csv};
      \addlegendentry{Casing Flips}

      \addplot +[color=yellow!60!black, line width=1pt, mark=square*, mark size=0.1pt] %
        table [x=fpr, y=tpr, col sep=comma] {figs/data/gdm_chunking_ngram_k=7_ngram_k=7_ppl_tpr_fpr.csv};
      \addlegendentry{Chunking}

      \addplot +[color=red, line width=1pt, mark=square*, mark size=0.1pt] %
        table [x=fpr, y=tpr, col sep=comma] {figs/data/p3_ngram_k=7_ngram_k=7_ppl_tpr_fpr.csv};
      \addlegendentry{\name}

        \nextgroupplot[
        title={kmin}, 
    ];

    \addplot +[color=gray, dotted, line width=1pt, samples=2, forget plot] {x};
    
      \addplot +[color=blue, line width=1pt, mark=*, mark size=0.1pt, forget plot] 
        table [x=fpr, y=tpr, col sep=comma] {figs/data/standard_ngram_k=7_k_min_probs_0.2_tpr_fpr.csv};

      \addplot +[color=brown, line width=1pt, mark=square*, mark size=0.1pt, forget plot] %
        table [x=fpr, y=tpr, col sep=comma] {figs/data/gdm_token_dropouts_ngram_k=7_ngram_k=7_k_min_probs_0.2_tpr_fpr.csv};

      \addplot +[color=green!80!black, line width=1pt, mark=square*, mark size=0.1pt, forget plot] %
        table [x=fpr, y=tpr, col sep=comma] {figs/data/gdm_casing_flips_ngram_k=7_ngram_k=7_k_min_probs_0.2_tpr_fpr.csv};

      \addplot +[color=yellow!60!black, line width=1pt, mark=square*, mark size=0.1pt, forget plot] %
        table [x=fpr, y=tpr, col sep=comma] {figs/data/gdm_chunking_ngram_k=7_ngram_k=7_k_min_probs_0.2_tpr_fpr.csv};

      \addplot +[color=red, line width=1pt, mark=square*, mark size=0.1pt, forget plot] %
        table [x=fpr, y=tpr, col sep=comma] {figs/data/p3_ngram_k=7_ngram_k=7_k_min_probs_0.2_tpr_fpr.csv};

          \nextgroupplot[title={zlib}];

    \addplot +[color=gray, dotted, line width=1pt, samples=2, forget plot] {x};
    
      \addplot +[color=blue, line width=1pt, mark=*, mark size=0.1pt, forget plot] 
        table [x=fpr, y=tpr, col sep=comma] {figs/data/standard_ngram_k=7_zlib_ratio_tpr_fpr.csv};

      \addplot +[color=brown, line width=1pt, mark=square*, mark size=0.1pt, forget plot] %
        table [x=fpr, y=tpr, col sep=comma] {figs/data/gdm_token_dropouts_ngram_k=7_ngram_k=7_zlib_ratio_tpr_fpr.csv};

      \addplot +[color=green!80!black, line width=1pt, mark=square*, mark size=0.1pt, forget plot] %
        table [x=fpr, y=tpr, col sep=comma] {figs/data/gdm_casing_flips_ngram_k=7_ngram_k=7_zlib_ratio_tpr_fpr.csv};

      \addplot +[color=yellow!60!black, line width=1pt, mark=square*, mark size=0.1pt, forget plot] %
        table [x=fpr, y=tpr, col sep=comma] {figs/data/gdm_chunking_ngram_k=7_ngram_k=7_zlib_ratio_tpr_fpr.csv};

      \addplot +[color=red, line width=1pt, mark=square*, mark size=0.1pt, forget plot] %
        table [x=fpr, y=tpr, col sep=comma] {figs/data/p3_ngram_k=7_ngram_k=7_zlib_ratio_tpr_fpr.csv};

          \nextgroupplot[
        title={perturb}];

    \addplot +[color=gray, dotted, line width=1pt, samples=2, forget plot] {x};
    
      \addplot +[color=blue, line width=1pt, mark=*, mark size=0.1pt, forget plot] 
        table [x=fpr, y=tpr, col sep=comma] {figs/data/standard_ngram_k=7_ppl_ratio_change_char_case_tpr_fpr.csv};

      \addplot +[color=brown, line width=1pt, mark=square*, mark size=0.1pt, forget plot] %
        table [x=fpr, y=tpr, col sep=comma] {figs/data/gdm_token_dropouts_ngram_k=7_ngram_k=7_ppl_ratio_change_char_case_tpr_fpr.csv};

      \addplot +[color=green!80!black, line width=1pt, mark=square*, mark size=0.1pt, forget plot] %
        table [x=fpr, y=tpr, col sep=comma] {figs/data/gdm_casing_flips_ngram_k=7_ngram_k=7_ppl_ratio_change_char_case_tpr_fpr.csv};

      \addplot +[color=yellow!60!black, line width=1pt, mark=square*, mark size=0.1pt, forget plot] %
        table [x=fpr, y=tpr, col sep=comma] {figs/data/gdm_chunking_ngram_k=7_ngram_k=7_ppl_ratio_change_char_case_tpr_fpr.csv};

      \addplot +[color=red, line width=1pt, mark=square*, mark size=0.1pt, forget plot] %
        table [x=fpr, y=tpr, col sep=comma] {figs/data/p3_ngram_k=7_ngram_k=7_ppl_ratio_change_char_case_tpr_fpr.csv};

          \nextgroupplot[
        title={reference}];

    \addplot +[color=gray, dotted, line width=1pt, samples=2, forget plot] {x};
    
      \addplot +[color=blue, line width=1pt, mark=*, mark size=0.1pt, forget plot] 
        table [x=fpr, y=tpr, col sep=comma] {figs/data/standard_ngram_k=7_ref_ppl_ratio_phi-1_5_tpr_fpr.csv};

      \addplot +[color=brown, line width=1pt, mark=square*, mark size=0.1pt, forget plot] %
        table [x=fpr, y=tpr, col sep=comma] {figs/data/gdm_token_dropouts_ngram_k=7_ngram_k=7_ref_ppl_ratio_phi-1_5_tpr_fpr.csv};

      \addplot +[color=green!80!black, line width=1pt, mark=square*, mark size=0.1pt, forget plot] %
        table [x=fpr, y=tpr, col sep=comma] {figs/data/gdm_casing_flips_ngram_k=7_ngram_k=7_ref_ppl_ratio_phi-1_5_tpr_fpr.csv};

      \addplot +[color=yellow!60!black, line width=1pt, mark=square*, mark size=0.1pt, forget plot] %
        table [x=fpr, y=tpr, col sep=comma] {figs/data/gdm_chunking_ngram_k=7_ngram_k=7_ref_ppl_ratio_phi-1_5_tpr_fpr.csv};

      \addplot +[color=red, line width=1pt, mark=square*, mark size=0.1pt, forget plot] %
        table [x=fpr, y=tpr, col sep=comma] {figs/data/p3_ngram_k=7_ngram_k=7_ref_ppl_ratio_phi-1_5_tpr_fpr.csv};
    
    \end{groupplot}
\end{tikzpicture} \vspace{-0.5cm}   
    \caption{ROC curves for \MI~tests using the $n$-gram ($k$=7) neighborhood on AI4Privacy.} %
    \label{fig:ai4privacy_ngram_auc_plots} %
\vspace{-0.7cm} \end{figure}
\vspace{-0.1cm}
\noindent\textbf{Overview.} We present the full ROC curves of tests before and after poisoning on the AI4Privacy and AGNews datasets in Figure~\ref{fig:ai4privacy_ngram_auc_plots}, with AUC scores presented in Table~\ref{table:mia_auc_all_ds}. Since we have trained the model on both poisoned neighbors (to make members look like non-members), and poisoned non-neighbors (to make non-members look like members), we expect the AUC to considerably reduce. Indeed, we observe that \name is nearly \textit{always} able to reduce AUC below random. In many cases, it can be reduced considerably below random, or even close to 0 (e.g., $n$-gram neighborhood with reference-based test). While the baselines are effective in some cases, \name consistently outperforms them since they were not designed to manipulate membership testing. This advantage likely stems from two key factors: (1) \namenospace's ability to generate poisons for \textit{both} neighbors and non-neighbors, and (2) the greater effectiveness of its poisoned non-neighbors, as shown in the exact match setting—where all methods are limited to poisoning non-neighbors only.
\\We also observe a general trend that aligns with Theorem~\ref{thm:odds} ---  the tests that perform the best naturally are also the lowest/rank low in terms of robust AUC.  For example, the reference-based test ranks the highest naturally, and the lowest under poisoning across all AI4Privacy settings, and in many settings in AGNews.
We also present the TPR@1\% FPR in Table~\ref{table:mia_tpr1fpr_all_ds} of Appendix~\ref{sec:extra_eval}, where we find that again, \name is able to reduce performance.

\noindent\textbf{Dataset Inference.} Dataset inference extends \MI~testing to whole datasets by (1) ensembling existing tests via a linear model and (2) using a T-test to compare scores from a suspect set to a reference set of known non-members~\cite{maini2024llm}.
We test whether poisoning affects this method by evaluating it on
\begin{wraptable}{r}{0.4\textwidth}
\tiny
\centering
\vspace{-3mm}
\caption{Natural and robust membership inference test AUC scores using $n$-gram neighborhood definition across different Pythia model sizes for 2.7B~/~6.9B~/~12B parameters on AI4Privacy.}
\label{table:model_sizes}
\vspace{-2mm}
\begin{tabular}{l|c|c}
\toprule
MI Test & Natural & \name \\

\midrule
Perplexity & 0.568~/~0.587~/~0.587 & 0.353~/~0.252~/~0.257 \\
kmin & 0.560~/~0.561~/~0.563 & 0.472~/~0.408~/~0.423   \\
zlib & 0.554~/~0.564~/~0.564  & 0.443~/~0.375~/~0.379 \\
perturb & 0.583~/~0.600~/~0.603 &  0.380~/~0.274~/~0.286\\
reference & 0.624~/~0.647~/~0.645 & 0.175~/~0.089~/~0.089 \\

\bottomrule
\end{tabular}
\end{wraptable}
models fine-tuned with poisoned data (Table~\ref{table:di_ai4privacy}).
The results show that dataset inference fails under poisoning, yielding incorrect predictions. This aligns with intuition: if individual tests are driven below random, so is their ensemble—mirroring how ensembling weak defenses fails for adversarial examples~\cite{he2017adversarial}.

\noindent\textbf{Impact of Neighborhood Radius.}
We examine how changing the neighborhood radius $r$ affects poisoning, focusing on the LOSS test with $n$-gram neighborhoods on AI4Privacy for $k \in [5, 7, 9, 11]$ (Figure~\ref{fig:sweeps}). As expected, larger radii (smaller $k$) reduce vulnerability to poisoned non-members but increase it for poisoned members, and vice versa.
\begin{wrapfigure}{r}{0.4\textwidth}
    \centering
    \vspace{-4mm}
    \begin{tikzpicture}\label{fig:sweeps}
    \small
    \begin{axis}[
      ybar,
      legend style={nodes={scale=0.7, transform shape}},
      legend pos=north west,
      legend image code/.code={ \draw[#1] (0cm,-0.1cm) rectangle (0.3cm,0.1cm); },
      legend cell align={left},
      ylabel={Metric value},
      xlabel={$n$-gram size},
      ymin=0, ymax=0.5,      
      xmin=4, xmax=12,
      ymajorgrids=true,
      grid style=dashed,
      xtick distance=4,
      width=0.8\linewidth,
      height=4cm,
      every node/.style={inner sep=0,outer sep=0}
      ]

      \addplot+[ybar, fill=teal, draw=none, bar width=.4] 
        coordinates {
          (5, 0.049020) (7, 0.055028) (9, 0.063462) (11, 0.077369)
        };
      \addlegendentry{\name (TPR)}

      \addplot+[ybar, fill=brown, draw=none, bar width=.4]
        coordinates {
          (5, 0.051546) (7, 0.217759) (9, 0.347917) (11, 0.428571) 
        };
      \addlegendentry{\name (FPR)}
    \end{axis}
\end{tikzpicture}
    \vspace{-2mm}
    \caption{TPR and FPR of the LOSS test after poisoning using $n$-gram neighborhood definitions $\in [5,7,9,11]$ on AI4Privacy.}
    \label{fig:sweeps}
    \vspace{-6mm}
\end{wrapfigure}

\noindent\textbf{Impact of Model Size.} We also study how model size affects poisoning success by repeating our $n$-gram $(k=7)$ experiments on AI4Privacy using Pythia 2.7B and 12B. \namenospace\; consistently reduces test performance across all sizes (see Table~\ref{table:model_sizes}). The 2.7B model shows slightly lower natural accuracy and slightly higher robustness, aligning with Theorem~\ref{thm:odds}.%

\vspace{-0.3cm}\section{Conclusion and Discussions}\label{sec:conc_disc_lim}

We have studied the reliability of membership inference against LLMs under poisoning attacks. Although the shift from exact matching to neighborhood-based definition aims to enhance reliability of \MI~tests, we reveal fundamental flaws remain even under this relaxed definition, calling into question what it truly means for a data point to be considered a member. Moreover, the wide applicability of our attack across common neighborhood definitions highlights inherent difficulties in designing a generic yet meaningful notion of membership.
One possible way forward is to consider model-dependent or context-aware definitions that better align with how \MI~tests actually operate. Finally, although we primarily focused on textual data, we expect our analysis to generalize beyond the language domain. For example, while approximate membership definitions for image data may consider transformations such as rotation, cropping, and filtering, such definitions are likely to suffer from similar robustness issues. 

\noindent\textbf{Limitations.} One limitation is that our attack for generating poison non-neighbors requires multiple poisons, i.e., $b_2=10$. Future work may improve upon this. We also focus our experiments on specific settings, and larger-scale evaluations with more models/datasets/neighborhoods can help towards evaluating what it truly means for a point to be a member. We also do not evaluate the pre-training setting due to the computational costs, although it has been shown to be viable~\cite{zhang2025persistent}.

\Urlmuskip=0mu plus 1mu\relax

\bibliographystyle{unsrt}
{
\small
\bibliography{bibliography}
}

\newpage
\appendix

\section{Proofs}\label{sec:all_proofs}

\begin{lemma}{(Expectation using survival function).}\label{lem:survival} Let $X$ be a random variable such that $P(X \geq 0) = 1$, then we have \[\mathbb{E}[X]=\int_{s=0}^{\infty} P(X > s)\;ds.\]
\end{lemma}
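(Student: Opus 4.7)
\textbf{Proof proposal for Lemma~\ref{lem:survival}.} The plan is to use the standard ``layer cake'' identity for nonnegative random variables, reducing the claim to an application of Tonelli's theorem. The key observation is that for any $x \geq 0$ we have the pointwise identity
\[
x \;=\; \int_{0}^{\infty} \mathds{1}[x > s]\, ds,
\]
since the integrand is $1$ on $s \in [0, x)$ and $0$ on $s \in [x, \infty)$. Because $P(X \geq 0) = 1$, this identity applied to $X(\omega)$ holds almost surely.

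Next, I would take expectations of both sides and swap the order of integration. Formally,
\[
\mathbb{E}[X] \;=\; \mathbb{E}\!\left[\int_{0}^{\infty} \mathds{1}[X > s]\, ds\right] \;=\; \int_{0}^{\infty} \mathbb{E}\bigl[\mathds{1}[X > s]\bigr]\, ds \;=\; \int_{0}^{\infty} P(X > s)\, ds,
\]
where the middle equality is the only nontrivial step. It is justified by Tonelli's theorem applied to the nonnegative, jointly measurable function $(s,\omega) \mapsto \mathds{1}[X(\omega) > s]$ on the product of Lebesgue measure on $[0, \infty)$ and the underlying probability measure. The last equality uses the fact that the expectation of an indicator is just the probability of the event.

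The only mild subtlety, and the step I would flag as the main ``obstacle,'' is handling the case $\mathbb{E}[X] = \infty$: the identity must remain valid as an equality in $[0, \infty]$. Tonelli's theorem covers this directly, since it allows both sides to be infinite simultaneously without requiring any integrability assumption on $X$. Joint measurability of $\mathds{1}[X > s]$ in $(s,\omega)$ follows because $\{(s,\omega) : X(\omega) > s\}$ is the preimage of an open set under the measurable map $(s,\omega) \mapsto X(\omega) - s$. Once these measurability and nonnegativity conditions are verified, the proof is essentially a one-line application of Tonelli, so I would not anticipate any deeper technical difficulty.
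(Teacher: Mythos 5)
Your proof is correct and follows the same underlying strategy as the paper's: write $x$ as an integral over $[0,x)$ and swap the order of integration to surface the survival function. The only difference is that the paper carries out the swap with a density, writing $\mathbb{E}[X]=\int_0^\infty x\,P(X=x)\,dx$ and applying Fubini to $\int_0^\infty P(X=x)\int_0^x ds\,dx$, which implicitly assumes $X$ is absolutely continuous; your indicator-based version with Tonelli avoids that assumption, correctly handles the $\mathbb{E}[X]=\infty$ case, and is therefore the more general and more rigorous formulation of the same argument.
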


\begin{proof}
\begin{align*}
\mathbb{E}[X] &= \int_{x=0}^{\infty} x P(X=x)dx \\
        &= \int_{x=0}^{\infty} P(X=x) \int_{s=0}^x ds\; dx\\
        &= \int_{s=0}^{\infty}  \int_{x=s}^{\infty} P(X=x)\;dx \; ds \\
        &= \int_{s=0}^{\infty} P(X > s)\;ds
\end{align*}
\end{proof}

\begin{lemma}{(Restatement of Lemma~\ref{lem:mi_adv}).}\label{proof:natural_separability} The advantage of a membership inference test is given by
\begin{align*}
    \underset{\substack{D \sim \boldsymbol{\mathcal{D}} \\ \theta = \mathcal{L}(D)}}{\mathbb{E}}\left(\MItest(x, \theta) \mid x \inn D \right) -
    \underset{\substack{D \sim \boldsymbol{\mathcal{D}} \\ \theta = \mathcal{L}(D)}}{\mathbb{E}}\left(\MItest(x, \theta) \mid x \innot D \right) = \\ \int\limits_{\gamma=0}^{\infty} (\text{Sens}(\MItest, x) d\gamma +\text{Spec}(\MItest, x)d\gamma - 1).
\end{align*}
\end{lemma}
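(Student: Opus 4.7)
The plan is to apply the survival-function representation of expectation (Lemma~\ref{lem:survival}) to each conditional expectation on the left-hand side, and then recognize the resulting integrands as sensitivity and one-minus-specificity. Since $\MItest_\gamma(x,\theta) \in \mathbb{R}_{\geq 0}$ by Definition~\ref{defn:mi_test}, the non-negativity hypothesis of Lemma~\ref{lem:survival} is satisfied under either conditioning event, and the lemma applies verbatim.

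First I would write
\begin{equation*}
\underset{\substack{D \sim \boldsymbol{\mathcal{D}} \\ \theta = \mathcal{L}(D)}}{\mathbb{E}}\!\left(\MItest_\gamma(x, \theta) \mid x \inn D \right) = \int_0^\infty \Pr\!\left(\MItest_\gamma(x,\theta) > \gamma \mid x \inn D\right) d\gamma,
\end{equation*}
and similarly for the conditioning event $x \innot D$. Next, I would identify each survival probability with the appropriate rate: by Definition~\ref{defn:nat_sens}, $\Pr(\MItest_\gamma(x,\theta) \geq \gamma \mid x \inn D) = \text{Sens}(\MItest_\gamma, x)$, and by Definition~\ref{defn:nat_spec}, $\Pr(\MItest_\gamma(x,\theta) \geq \gamma \mid x \innot D) = 1 - \text{Spec}(\MItest_\gamma, x)$ (since specificity is the complementary event). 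Subtracting the two integrals and combining yields
\begin{equation*}
\int_0^\infty \big[\text{Sens}(\MItest_\gamma, x) - (1 - \text{Spec}(\MItest_\gamma, x))\big] d\gamma = \int_0^\infty \big(\text{Sens}(\MItest_\gamma, x) + \text{Spec}(\MItest_\gamma, x) - 1\big) d\gamma,
\end{equation*}
which is exactly the right-hand side.

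There is essentially no hard step: the only minor subtlety is the distinction between strict inequality ``$>$'' used in Lemma~\ref{lem:survival} and the non-strict inequality ``$\geq$'' used to define sensitivity and specificity. Since these two events differ only on the set $\{\MItest_\gamma(x,\theta) = \gamma\}$, which has Lebesgue measure zero after integrating over $\gamma$ (for any fixed distribution of the test statistic, the atom locations form at most a countable set), the two integrals agree. So the bulk of the proof is a two-line application of Fubini (implicit in Lemma~\ref{lem:survival}) followed by a direct identification of integrands. If one wants to be fully rigorous about atoms, one can instead redefine sensitivity/specificity with strict inequalities without changing the final statement, since the integral is insensitive to the convention.
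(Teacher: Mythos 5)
Your proposal is correct and follows essentially the same route as the paper: apply the survival-function identity (Lemma~\ref{lem:survival}) to each conditional expectation, identify the integrands with $\text{Sens}$ and $1-\text{Spec}$, and combine. Your extra remark about the strict-versus-non-strict inequality being immaterial (atoms form a Lebesgue-null set in $\gamma$) is a small rigor bonus the paper's proof glosses over, but it does not change the argument.
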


\begin{proof}
Using Lemma~\ref{lem:survival}, we get
\begin{align*}
\underset{\substack{D \sim \boldsymbol{\mathcal{D}} \\ \theta = \mathcal{L}(D)}}{\mathbb{E}}\left(\MItest_\gamma(x, \theta) \mid x \inn D \right) -
    \underset{\substack{D \sim \boldsymbol{\mathcal{D}} \\ \theta = \mathcal{L}(D)}}{\mathbb{E}}\left(\MItest_\gamma(x, \theta) \mid x \innot D \right) \\
    = \int_{\gamma=0}^\infty \underset{D\sim\boldsymbol{\mathcal{D}}}{P}(\MItest_\gamma(x,\theta) > \gamma \mid x \inn D) d\gamma - \int_{\gamma=0}^\infty \underset{D\sim\boldsymbol{\mathcal{D}}}{P}(\MItest_\gamma(x,\theta) > \gamma \mid x \innot D) d\gamma \\
    = \int_{\gamma=0}^\infty \underset{D\sim\boldsymbol{\mathcal{D}}}{P}(\MItest_\gamma(x,\theta) > \gamma \mid x \inn D) d\gamma + \int_{\gamma=0}^\infty \underset{D\sim\boldsymbol{\mathcal{D}}}{P}(\MItest_\gamma(x,\theta) \leq \gamma \mid x \innot D) d\gamma - 1\\
    = \int\limits_{\gamma=0}^{\infty} (\text{Sens}(\MItest_\gamma, x) d\gamma +\text{Spec}(\MItest_\gamma, x)d\gamma - 1)
\end{align*}
\end{proof}

\begin{definition}{(Targeted Expansion) The targeted $b-$neighborhood expansion (around a point $x$) of a dataset $D \in \mathcal{P}(\mathcal{X})$ from a set $S$ to a set $S'$ is the set of all datasets that can be made by only substituting at most $b$ sequences (from $D$) that lie in $S$ with points in $S'$:}
\[
\overline{\mathcal{B}}_b(D, \mathcal{S}, \mathcal{S'}) = \{ D' \subseteq \mathcal{X} \mid |D'| = |D|, D' \cap S \subseteq D \cap S, D \cap S \subseteq D'\cap S, |D' \Delta D| = 2b \}.
\]
    
\end{definition}

\begin{lemma}{Upper bound on MI score under neighbor poisons.}\label{proof:upmi}

\begin{align*}
    \underset{\substack{D' \sim \mathcal{B}_b(D,\mathcal{N}_r(x))\\\theta=\mathcal{L}(D')}}{\min}\MItest_\gamma(x,\theta) \leq \underset{\substack{D' \sim \overline{\mathcal{B}}_b(D,\mathcal{N}_r(x),\overline{\mathcal{N}}_r(x))\\\theta=\mathcal{L}(D')}}{\mathbb{E}} \MItest_\gamma(x,\theta) + \underset{\substack{x_1,...x_b \sim \mathcal{N}_r(x)\cap D\\x'_1,...,x'_b \sim \overline{\mathcal{N}}_r(x)\\D' = D \setminus\{x_1,...,x_b\}}}{\mathbb{E}}\; \delta_{(x'_1,...,x'_b)}^{(x,D')}.
\end{align*}

\end{lemma}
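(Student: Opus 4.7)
The plan is to exhibit a specific (randomized) element of $\mathcal{B}_b(D,\mathcal{N}_r(x))$ whose score is controllable, and then use the trivial fact that a minimum is upper-bounded by any expectation over the feasible set. The construction will exactly be the \namenospace mapping applied to a random non-neighbor substitution, which is what produces the first RHS term, and the error of that mapping is what produces the second RHS term.

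Concretely, I would first draw $(x_1,\dots,x_b)\sim D\cap\mathcal{N}_r(x)$ and $(x'_1,\dots,x'_b)\sim \overline{\mathcal{N}}_r(x)$, and write $D'=D\setminus\{x_1,\dots,x_b\}$. Define the ``non-neighbor'' dataset $D_{\mathrm{nn}} = D'\cup\{x'_1,\dots,x'_b\}\in \overline{\mathcal{B}}_b(D,\mathcal{N}_r(x),\overline{\mathcal{N}}_r(x))$, and the ``poisoned-neighbor'' dataset $D_{\mathrm{pn}} = D'\cup (y_1,\dots,y_b)$, where $(y_1,\dots,y_b) = \texttt{PoisonM}^{b}_{(x,D')}((x'_1,\dots,x'_b),\mathcal{N}_r(x))$ is the output of the \namenospace optimization restricted to $\mathcal{N}_r(x)^{b}$. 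By construction each $y_i\in\mathcal{N}_r(x)$ and exactly $b$ neighbors of $x$ in $D$ have been replaced, so $D_{\mathrm{pn}}\in\mathcal{B}_b(D,\mathcal{N}_r(x))$, making it a feasible competitor for the inner minimization on the LHS.

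Next I would invoke the defining property of the \namenospace mapping error:
\begin{equation*}
\bigl|\MItest_\gamma(x,\mathcal{L}(D_{\mathrm{pn}})) - \MItest_\gamma(x,\mathcal{L}(D_{\mathrm{nn}}))\bigr| \;\leq\; \delta_{(x'_1,\dots,x'_b)}^{(x,D')},
\end{equation*}
which follows directly from the argmin in Equation~\ref{eqn:mirror_obj} and its $b$-point extension. Dropping the absolute value gives $\MItest_\gamma(x,\mathcal{L}(D_{\mathrm{pn}})) \leq \MItest_\gamma(x,\mathcal{L}(D_{\mathrm{nn}})) + \delta_{(x'_1,\dots,x'_b)}^{(x,D')}$. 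Since $D_{\mathrm{pn}}$ is feasible for the min on the LHS, we have the pointwise bound
\begin{equation*}
\min_{\substack{D''\in\mathcal{B}_b(D,\mathcal{N}_r(x))\\\theta=\mathcal{L}(D'')}}\MItest_\gamma(x,\theta) \;\leq\; \MItest_\gamma(x,\mathcal{L}(D_{\mathrm{nn}})) + \delta_{(x'_1,\dots,x'_b)}^{(x,D')}.
\end{equation*}

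Finally, the LHS does not depend on the random draws, so taking expectation over $(x_1,\dots,x_b)$ and $(x'_1,\dots,x'_b)$ preserves the inequality, and linearity of expectation splits the RHS into the two terms in the statement: the expectation of $\MItest_\gamma(x,\mathcal{L}(D_{\mathrm{nn}}))$ is precisely an expectation over $\overline{\mathcal{B}}_b(D,\mathcal{N}_r(x),\overline{\mathcal{N}}_r(x))$ (under the induced distribution on substituted pairs), and the expectation of $\delta_{(x'_1,\dots,x'_b)}^{(x,D')}$ matches the second RHS term verbatim. The only real subtlety to verify carefully is that the joint law of $(x_1,\dots,x_b,x'_1,\dots,x'_b)$ induces the claimed sampling distribution on $\overline{\mathcal{B}}_b$ in the first RHS term; this should amount to matching marginals, and I do not expect it to be an obstacle. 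The main conceptual step is recognizing that the \namenospace mapping is exactly the right tool to translate a feasible non-neighbor substitution into a feasible neighbor substitution with controlled score slippage.
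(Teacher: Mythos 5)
Your proof is correct and essentially matches the paper's: both arguments construct a feasible competitor for the inner minimum by applying the \namenospace mapping to a random non-neighbor substitution, invoke the defining property of the mapping error $\delta$ to relate the score of that competitor to the score under the non-neighbor dataset, and then bound the minimum via expectations. The one small difference is that you establish the inequality pointwise and average only at the end, which sidesteps the paper's explicit Jensen step $\bigl|\mathbb{E}[A-B]\bigr|\leq\mathbb{E}\bigl[|A-B|\bigr]$; this is a modest streamlining rather than a genuinely different route, and you correctly identify the same implicit distribution-matching simplification that the paper also glosses over when rewriting the expectation over substituted pairs as an expectation over $\overline{\mathcal{B}}_b$.
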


\begin{proof} 
For $x'_1, ..., x'_b \sim \overline{\mathcal{N}}_r(x)$, we know,
\[
|\MItest_\gamma(x,L(D\cup\{x'_1,...,x'_b\})) - \MItest_\gamma(x,L(D\cup\{\texttt{PoisonMap}^b_{(x,D)}((x'_1,...,x'_b),\mathcal{N}_r(x))\}| = \delta_{(x'_1,...,x'_b)}^{(x,D)}
\]

Now, to extend this for dataset substitutions, for $x_1,...,x_b \sim \mathcal{N}_r(x) \cap D$ and $D' = D\setminus\{x_1,...x_b\}$, we have:
\[
|\MItest_\gamma(x,L(D'\cup\{x'_1,...,x'_b\})) - \MItest_\gamma(x,L(D'\cup\{\texttt{PoisonMap}^b_{(x,D')}((x'_1,...,x'_b),\mathcal{N}_r(x))\}| = \delta_{(x'_1,...,x'_b)}^{(x,D')}
\]

Taking expectation over points, we get:

\begin{align*}
\underset{\substack{x_1,...x_b \sim \mathcal{N}_r(x) \cap D\\x'_1,...,x'_b \sim \overline{\mathcal{N}}_r(x)\\D' = D \setminus\{x_1,...,x_b\}}}{\mathbb{E}}
|\MItest_\gamma(x,L(D'\cup\{x'_1,...,x'_b\})) - \MItest_\gamma(x,L(D'\cup\{\texttt{PoisonMap}^b_{(x,D')}((x'_1,...,x'_b),\mathcal{N}_r(x))\}| \\
= \underset{\substack{x_1,...x_b \sim \mathcal{N}_r(x) \cap D\\x'_1,...,x'_b \sim \overline{\mathcal{N}}_r(x)\\D' = D \setminus\{x_1,...,x_b\}}}{\mathbb{E}} \delta_{(x'_1,...,x'_b)}^{(x,D')}
\end{align*}

Using Jensen's Inequality:

\begin{align*}
\lvert\underset{\substack{x_1,...x_b \sim \mathcal{N}_r(x) \cap D\\x'_1,...,x'_b \sim \overline{\mathcal{N}}_r(x)\\D' = D \setminus\{x_1,...,x_b\}}}{\mathbb{E}} \MItest_\gamma(x,L(D'\cup\{x'_1,...,x'_b\})) - \underset{\substack{x_1,...x_b \sim \mathcal{N}_r(x)\cap D\\x'_1,...,x'_b \sim \overline{\mathcal{N}}_r(x)\\D' = D \setminus\{x_1,...,x_b\}}}{\mathbb{E}} \MItest_\gamma(x,L(D'\cup\{\texttt{PoisonMap}^b_{(x,D')}((x'_1,...,x'_b),\mathcal{N}_r(x))\}\rvert \\
\leq \underset{\substack{x_1,...x_b \sim \mathcal{N}_r(x) \cap D\\x'_1,...,x'_b \sim \overline{\mathcal{N}}_r(x)\\D' = D \setminus\{x_1,...,x_b\}}}{\mathbb{E}} \delta_{(x'_1,...,x'_b)}^{(x,D')}
\end{align*}

Taking one side of the absolute value:

\begin{align*}
\underset{\substack{x_1,...x_b \sim \mathcal{N}_r(x)\cap D\\x'_1,...,x'_b \sim \overline{\mathcal{N}}_r(x)\\D' = D \setminus\{x_1,...,x_b\}}}{\mathbb{E}}\MItest_\gamma(x,L(D'\cup\{\texttt{PoisonMap}^b_{(x,D')}((x'_1,...,x'_b),\mathcal{N}_r(x))\}\\
\leq \underset{\substack{x_1,...x_b \sim \mathcal{N}_r(x)\cap D\\x'_1,...,x'_b \sim \overline{\mathcal{N}}_r(x)\\D' = D \setminus\{x_1,...,x_b\}}}{\mathbb{E}} \MItest_\gamma(x,L(D'\cup\{x'_1,...,x'_b\})) + \underset{\substack{x_1,...x_b \sim \mathcal{N}_r(x) \cap D\\x'_1,...,x'_b \sim \overline{\mathcal{N}}_r(x)\\D' = D \setminus\{x_1,...,x_b\}}}{\mathbb{E}} \delta_{(x'_1,...,x'_b)}^{(x,D')}
\end{align*}

Then,
\[
\underset{\substack{x_1,...x_b \in \mathcal{N}_r(x)\cap D\\x'_1,...,x'_b \in \mathcal{N}_r(x)\\D' = D \setminus\{x_1,...,x_b\}}}{\min}\MItest_\gamma(x,L(D'\cup\{x'_1,...,x'_b\} \leq \underset{\substack{x_1,...x_b \sim \mathcal{N}_r(x)\cap D\\x'_1,...,x'_b \sim \overline{\mathcal{N}}_r(x)\\D' = D \setminus\{x_1,...,x_b\}}}{\mathbb{E}} \MItest_\gamma(x,L(D'\cup\{x'_1,...,x'_b\})) + \underset{\substack{x_1,...x_b \sim \mathcal{N}_r(x)\cap D\\x'_1,...,x'_b \sim \overline{\mathcal{N}}_r(x)\\D' = D \setminus\{x_1,...,x_b\}}}{\mathbb{E}} \delta_{(x'_1,...,x'_b)}^{(x,D')}
\]
Simplifying:
\[
\underset{\substack{D' \sim \mathcal{B}_b(D,\mathcal{N}_r(x))\\\theta=\mathcal{L}(D')}}{\min}\MItest_\gamma(x,\theta) \leq \underset{\substack{D' \sim \overline{\mathcal{B}}_b(D,\mathcal{N}_r(x),\overline{\mathcal{N}}_r(x))\\\theta=\mathcal{L}(D')}}{\mathbb{E}} \MItest_\gamma(x,\theta)) + \underset{\substack{x_1,...x_b \sim \mathcal{N}_r(x)\cap D\\x'_1,...,x'_b \sim \overline{\mathcal{N}}_r(x)\\D' = D \setminus\{x_1,...,x_b\}}}{\mathbb{E}} \delta_{(x'_1,...,x'_b)}^{(x,D')}
\]

\end{proof}

\begin{lemma}{Lower bound on MI score under non-neighbor poisons.}\label{proof:lbmi}

\begin{align*}
    \underset{\substack{D' \sim \mathcal{B}_b(D,\overline{\mathcal{N}}_r(x))\\\theta=\mathcal{L}(D')}}{\max}\MItest_\gamma(x,\theta) \geq \underset{\substack{D' \sim \overline{\mathcal{B}}_b(D,\overline{\mathcal{N}}_r(x),\mathcal{N}_r(x))\\\theta=\mathcal{L}(D')}}{\mathbb{E}} \MItest_\gamma(x,\theta)) - \underset{\substack{x_1,...x_b \sim \overline{\mathcal{N}}_r(x)\\x'_1,...,x'_b \sim \mathcal{N}_r(x)\\D' = D \setminus\{x_1,...,x_b\}}}{\mathbb{E}}\; \delta_{(x'_1,...,x'_b)}^{(x,D')}.
\end{align*}

\end{lemma}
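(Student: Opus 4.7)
The plan is to mirror the argument of Lemma~\ref{proof:upmi}, but in the direction of a lower bound rather than an upper bound, because the adversary now injects poisoned non-neighbors (targeting a false positive) rather than poisoned neighbors. First I would instantiate the definition of \namenospace's mapping error $\delta_{(x'_1,\ldots,x'_b)}^{(x,D)}$ with $S = \overline{\mathcal{N}}_r(x)$, so the optimal poison $\texttt{PoisonM}^b_{(x,D)}((x'_1,\ldots,x'_b),\overline{\mathcal{N}}_r(x))$ is a tuple of non-neighbors whose effect on $\MItest_\gamma(x,\cdot)$ closely mimics that of a sampled tuple of genuine neighbors $(x'_1,\ldots,x'_b)\sim\mathcal{N}_r(x)$.

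Next I would lift this pointwise identity to dataset substitutions by fixing $b$ original non-neighbors $x_1,\ldots,x_b \in \overline{\mathcal{N}}_r(x)\cap D$ to be replaced, setting $D' = D\setminus\{x_1,\ldots,x_b\}$, and rewriting the mapping-error equation with respect to this $D'$. Taking expectation over the random choices of the removed non-neighbors and the sampled neighbors, and then applying Jensen's inequality to pull the absolute value outside the expectation, gives an inequality of the form
\[\Bigl|\mathbb{E}\,\MItest_\gamma\bigl(x,\mathcal{L}(D'\cup\{x'_1,\ldots,x'_b\})\bigr) - \mathbb{E}\,\MItest_\gamma\bigl(x,\mathcal{L}(D'\cup \texttt{PoisonM}^b_{(x,D')}((x'_1,\ldots,x'_b),\overline{\mathcal{N}}_r(x)))\bigr)\Bigr| \leq \mathbb{E}\,\delta_{(x'_1,\ldots,x'_b)}^{(x,D')}.\]

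I would then drop the absolute value on the side that yields the desired direction, concluding that the expected $\MItest_\gamma$ score after the \emph{poison} substitution is at least the expected $\MItest_\gamma$ score after the hypothetical \emph{neighbor} substitution minus the expected mapping error. The final step is to observe that the left-hand side of the target inequality is a maximum over all non-neighbor-to-non-neighbor substitutions in $\mathcal{B}_b(D,\overline{\mathcal{N}}_r(x))$. Because the poisoned dataset produced by \namenospace\ lies in $\mathcal{B}_b(D,\overline{\mathcal{N}}_r(x))$, the max dominates any expectation of $\MItest_\gamma$ over specific realizations of this poisoned dataset; chaining this dominance with the Jensen-derived bound delivers the claim.

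The main obstacle I anticipate is bookkeeping the measures over which the expectations are taken --- in particular, justifying that the same distribution underlies both the ``neighbor substitution'' expectation (which matches the targeted expansion $\overline{\mathcal{B}}_b(D,\overline{\mathcal{N}}_r(x),\mathcal{N}_r(x))$ appearing on the right-hand side of the claim) and the ``poison substitution'' expectation, and ensuring that the sampling convention for $(x'_1,\ldots,x'_b)\sim\mathcal{N}_r(x)$ is consistent with the definition of the targeted expansion. Once this alignment is made consistent with the convention used in Lemma~\ref{proof:upmi}, the rest is a direct transcription with flipped signs.
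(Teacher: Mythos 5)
Your proposal is correct and follows essentially the same route as the paper's proof: the pointwise mapping-error identity for poisons in $\overline{\mathcal{N}}_r(x)$ mimicking sampled neighbors, lifting to dataset substitutions via $D' = D\setminus\{x_1,\ldots,x_b\}$, taking expectations and applying Jensen's inequality, dropping one side of the absolute value, and finally bounding the expectation over poison realizations by the maximum over $\mathcal{B}_b(D,\overline{\mathcal{N}}_r(x))$ while identifying the neighbor-substitution expectation with the targeted expansion $\overline{\mathcal{B}}_b(D,\overline{\mathcal{N}}_r(x),\mathcal{N}_r(x))$. The measure-bookkeeping concern you flag is handled in the paper exactly by the sampling convention you describe, so no changes are needed.
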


\begin{proof} 
For $x'_1, ..., x'_b \sim \mathcal{N}_r(x)$, we know,
\[
|\MItest_\gamma(x,L(D\cup\{x'_1,...,x'_b\})) - \MItest_\gamma(x,L(D\cup\{\texttt{PoisonMap}^b_{(x,D)}((x'_1,...,x'_b),\overline{\mathcal{N}}_r(x))\}| = \delta_{(x'_1,...,x'_b)}^{(x,D)}
\]

Now, to extend this for dataset substitutions, for $x_1,...,x_b \sim \overline{\mathcal{N}}_r(x) \cap D$ and $D' = D\setminus\{x_1,...x_b\}$, we have
\[
|\MItest_\gamma(x,L(D'\cup\{x'_1,...,x'_b\})) - \MItest_\gamma(x,L(D'\cup\{\texttt{PoisonMap}^b_{(x,D')}((x'_1,...,x'_b),\overline{\mathcal{N}}_r(x))\}| = \delta_{(x'_1,...,x'_b)}^{(x,D')}
\]

Taking expectation over points, we get

\begin{align*}
\underset{\substack{x_1,...x_b \sim \overline{\mathcal{N}}_r(x)\cap D\\x'_1,...,x'_b \sim \mathcal{N}_r(x)\\D' = D \setminus\{x_1,...,x_b\}}}{\mathbb{E}}
|\MItest_\gamma(x,L(D'\cup\{x'_1,...,x'_b\})) - \MItest_\gamma(x,L(D'\cup\{\texttt{PoisonMap}^b_{(x,D')}((x'_1,...,x'_b),\overline{\mathcal{N}}_r(x))\}| \\
= \underset{\substack{x_1,...x_b \sim \overline{\mathcal{N}}_r(x)\cap D\\x'_1,...,x'_b \sim \mathcal{N}_r(x)\\D' = D \setminus\{x_1,...,x_b\}}}{\mathbb{E}} \delta_{(x'_1,...,x'_b)}^{(x,D')}
\end{align*}

Using Jensen's Inequality:
\begin{align*}
\lvert\underset{\substack{x_1,...x_b \sim \overline{\mathcal{N}}_r(x)\cap D\\x'_1,...,x'_b \sim \mathcal{N}_r(x)\\D' = D \setminus\{x_1,...,x_b\}}}{\mathbb{E}} \MItest_\gamma(x,L(D'\cup\{x'_1,...,x'_b\})) -
\underset{\substack{x_1,...x_b \sim \overline{\mathcal{N}}_r(x)\cap D\\x'_1,...,x'_b \sim \mathcal{N}_r(x)\\D' = D \setminus\{x_1,...,x_b\}}}{\mathbb{E}} \MItest_\gamma(x,L(D'\cup\{\texttt{PoisonMap}^b_{(x,D')}((x'_1,...,x'_b),\overline{\mathcal{N}}_r(x))\}\rvert \\
\leq \underset{\substack{x_1,...x_b \sim \overline{\mathcal{N}}_r(x)\cap D\\x'_1,...,x'_b \sim \mathcal{N}_r(x)\\D' = D \setminus\{x_1,...,x_b\}}}{\mathbb{E}} \delta_{(x'_1,...,x'_b)}^{(x,D')}
\end{align*}

Taking one side of the absolute value:
\begin{align*}
\underset{\substack{x_1,...x_b \sim \overline{\mathcal{N}}_r(x)\cap D\\x'_1,...,x'_b \sim \mathcal{N}_r(x)\\D' = D \setminus\{x_1,...,x_b\}}}{\mathbb{E}}\MItest_\gamma(x,L(D'\cup\{\texttt{PoisonMap}^b_{(x,D')}((x'_1,...,x'_b),\overline{\mathcal{N}}_r(x))\} \\
\geq \underset{\substack{x_1,...x_b \sim \overline{\mathcal{N}}_r(x)\cap D\\x'_1,...,x'_b \sim \mathcal{N}_r(x)\\D' = D \setminus\{x_1,...,x_b\}}}{\mathbb{E}} \MItest_\gamma(x,L(D'\cup\{x'_1,...,x'_b\})) - \underset{\substack{x_1,...x_b \sim \overline{\mathcal{N}}_r(x)\cap D\\x'_1,...,x'_b \sim \mathcal{N}_r(x)\\D' = D \setminus\{x_1,...,x_b\}}}{\mathbb{E}} \delta_{(x'_1,...,x'_b)}^{(x,D')}
\end{align*}
Then,
\[
\underset{\substack{x_1,...x_b \sim \overline{\mathcal{N}}_r(x)\cap D\\x'_1,...,x'_b \sim \overline{\mathcal{N}}_r(x)\\D' = D \setminus\{x_1,...,x_b\}}}{\max}\MItest_\gamma(x,L(D'\cup\{x'_1,...,x'_b\} \geq \underset{\substack{x_1,...x_b \sim \overline{\mathcal{N}}_r(x)\cap D\\x'_1,...,x'_b \sim \mathcal{N}_r(x)\\D' = D \setminus\{x_1,...,x_b\}}}{\mathbb{E}} \MItest_\gamma(x,L(D'\cup\{x'_1,...,x'_b\})) - \underset{\substack{x_1,...x_b \sim \overline{\mathcal{N}}_r(x)\cap D\\x'_1,...,x'_b \sim \mathcal{N}_r(x)\\D' = D \setminus\{x_1,...,x_b\}}}{\mathbb{E}} \delta_{(x'_1,...,x'_b)}^{(x,D')}
\]
Simplifying:
\[
\underset{\substack{D' \sim \mathcal{B}_b(D,\overline{\mathcal{N}}_r(x))\\\theta=\mathcal{L}(D')}}{\max}\MItest_\gamma(x,\theta) \geq \underset{\substack{D' \sim \overline{\mathcal{B}}_b(D,\overline{\mathcal{N}}_r(x),\mathcal{N}_r(x))\\\theta=\mathcal{L}(D')}}{\mathbb{E}} \MItest_\gamma(x,\theta)) - \underset{\substack{x_1,...x_b \sim \overline{\mathcal{N}}_r(x)\cap D\\x'_1,...,x'_b \sim \mathcal{N}_r(x)\\D' = D \setminus\{x_1,...,x_b\}}}{\mathbb{E}} \delta_{(x'_1,...,x'_b)}^{(x,D')}
\]

\end{proof}

\begin{lemma}{Advantage of MI under poisoning.}\label{proof:robust_sep}
\begin{align*}
    \int\limits_{\gamma=0}^{\infty} (\text{Sensitivity}_{b_1}(\MItest_\gamma, x) d\gamma +\text{Specificity}_{b_2}(\MItest_\gamma, x)d\gamma - 1) \leq
    \underset{\substack{D \sim \boldsymbol{\mathcal{D}} \\ D' \sim \tilde{\mathcal{B}}_{b_1}(D, \mathcal{N}_r(x), \overline{\mathcal{N}}_r(x)) \\ \theta = \mathcal{L}(D')}}{\mathbb{E}} \left( \MItest_\gamma(x, \theta) \mid x \inn D \right) \\ - \underset{\substack{D \sim \boldsymbol{\mathcal{D}} \\ D' \sim \tilde{\mathcal{B}}_{b_2}(D, \overline{\mathcal{N}}_r(x), \mathcal{N}_r(x)) \\ \theta = \mathcal{L}(D')}}{\mathbb{E}} \left( \MItest_\gamma(x, \theta) \mid x \innot D \right) + \underset{\substack{x_1,...x_{b_1} \sim \mathcal{N}_r(x)\\x'_1,...,x'_{b_1} \sim \overline{\mathcal{N}}_r(x)\\D' = D \setminus\{x_1,...,x_{b_1}\}}}{\mathbb{E}}\; \delta_{(x'_1,...,x'_{b_1})}^{(x,D')} + \underset{\substack{x_1,...x_{b_2} \sim \overline{\mathcal{N}}_r(x)\\x'_1,...,x'_{b_2} \sim \mathcal{N}_r(x)\\D' = D \setminus\{x_1,...,x_{b_2}\}}}{\mathbb{E}}\; \delta_{(x'_1,...,x'_{b_2})}^{(x,D')}
\end{align*}
\end{lemma}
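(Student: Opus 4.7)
The plan is to reduce the LHS to a difference of expected worst-case test scores via the survival-function identity, and then invoke the two pointwise bounds already proved as Lemmas~\ref{proof:upmi} and~\ref{proof:lbmi}.

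First, I would unpack the integrand. Writing $M^{\min}(D) := \min_{D' \in \mathcal{B}_{b_1}(D, \mathcal{N}_r(x))}\MItest_\gamma(x, \mathcal{L}(D'))$ and $M^{\max}(D) := \max_{D' \in \mathcal{B}_{b_2}(D, \overline{\mathcal{N}}_r(x))}\MItest_\gamma(x, \mathcal{L}(D'))$, the definitions of robust sensitivity and specificity give
\[
\text{Sensitivity}_{b_1}(\MItest_\gamma,x) + \text{Specificity}_{b_2}(\MItest_\gamma,x) - 1 = \Pr_{D}\!\bigl(M^{\min}(D) \geq \gamma \mid x \inn D\bigr) - \Pr_{D}\!\bigl(M^{\max}(D) \geq \gamma \mid x \innot D\bigr).
\]
Integrating in $\gamma$ over $[0,\infty)$ and interchanging with the outer expectation by Tonelli (everything is non-negative), Lemma~\ref{lem:survival} converts each probability integral into an expectation of the random variable, yielding
\[
\int_0^\infty \bigl(\text{RSens}_{b_1}+\text{RSpec}_{b_2}-1\bigr) d\gamma = \mathbb{E}\bigl[M^{\min}(D) \mid x \inn D\bigr] - \mathbb{E}\bigl[M^{\max}(D) \mid x \innot D\bigr].
\]

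Second, I would apply the pointwise bounds. For each fixed $D$ with $x \inn D$, Lemma~\ref{proof:upmi} upper-bounds $M^{\min}(D)$ by $\mathbb{E}_{D' \sim \tilde{\mathcal{B}}_{b_1}(D, \mathcal{N}_r(x), \overline{\mathcal{N}}_r(x))}[\MItest_\gamma(x, \mathcal{L}(D'))]$ plus the mapping-error term $\delta^{(x,D')}_{(x'_1,\ldots,x'_{b_1})}$ averaged over the substitution draws. Symmetrically, for each fixed $D$ with $x \innot D$, Lemma~\ref{proof:lbmi} lower-bounds $M^{\max}(D)$ by the analogous expectation under $\tilde{\mathcal{B}}_{b_2}(D, \overline{\mathcal{N}}_r(x), \mathcal{N}_r(x))$ minus its mapping-error term; negating flips this into an upper bound on $-M^{\max}(D)$. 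Taking the outer conditional expectations over $D \sim \boldsymbol{\mathcal{D}}$ (conditioned on $x \inn D$ and $x \innot D$ respectively) preserves both inequalities by monotonicity of conditional expectation, and summing them reproduces exactly the four terms on the RHS of the claim.

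The main obstacle I expect is bookkeeping the nested randomness: the inner samples in Lemmas~\ref{proof:upmi} and~\ref{proof:lbmi} (the $b$ removed points from $D \cap \mathcal{N}_r(x)$ or $D \cap \overline{\mathcal{N}}_r(x)$, and the fresh complementary sample) must compose cleanly with the outer draw $D \sim \boldsymbol{\mathcal{D}}$ under the membership conditioning. One has to verify that under $x \inn D$ the set $D \cap \mathcal{N}_r(x)$ has at least $b_1$ elements (and symmetrically for $b_2$), which is the standing assumption made in the footnote accompanying the definition of $\mathcal{B}_b$, and that the induced joint distribution on $(D, D')$ matches the one written on the RHS. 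Once this is carefully aligned, nothing further is needed beyond Tonelli and monotonicity.
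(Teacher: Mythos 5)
Your proposal is correct and follows essentially the same route as the paper's proof: convert the integrated robust sensitivity/specificity into conditional expectations of the worst-case (min/max) scores via Lemma~\ref{lem:survival}, then apply the pointwise bounds of Lemmas~\ref{proof:upmi} and~\ref{proof:lbmi} and take outer expectations over $D \sim \boldsymbol{\mathcal{D}}$. Your added care about composing the nested sampling with the conditioning (and the footnoted assumption that enough substitutable points exist) is a reasonable tightening of bookkeeping the paper treats implicitly, but it does not change the argument.
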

\begin{proof}
    Using Lemma~\ref{lem:survival},  we get
    \begin{align*}
    \underset{D \sim \boldsymbol{\mathcal{D}}}{\mathbb{E}}\left(\underset{\substack{D' \sim \mathcal{B}_{b_1}(D,\mathcal{N}_r(x))\\\theta=\mathcal{L}(D')}}{\min}\MItest_\gamma(x,\theta)\;\middle| \; x\inn D\right) &= \int_{\gamma=0}^\infty P_{D \sim \boldsymbol{\mathcal{D}}}\left(\underset{\substack{D' \sim \mathcal{B}_{b_1}(D,\mathcal{N}_r(x))\\\theta=\mathcal{L}(D')}}{\min}\MItest_\gamma(x,\theta) > \gamma\;\middle| \; x\inn D\right)d\gamma\\
    &= \int_{\gamma=0}^\infty \text{Sens}_{b_1}(\MItest_\gamma,x) d\gamma
    \end{align*}
    Similarly,
    \begin{align*}
    \underset{D \sim \boldsymbol{\mathcal{D}}}{\mathbb{E}}\left(\underset{\substack{D' \sim \mathcal{B}_{b_2}(D,\overline{\mathcal{N}}_r(x))\\\theta=\mathcal{L}(D')}}{\max}\MItest_\gamma(x,\theta)\; \middle| \;x\innot D \right) &= \int_{\gamma=0}^\infty P_{D \sim \boldsymbol{\mathcal{D}}}\left(\underset{\substack{D' \sim \mathcal{B}_{b_2}(D,\overline{\mathcal{N}}_r(x))\\\theta=\mathcal{L}(D')}}{\max}\MItest_\gamma(x,\theta) \leq \gamma \; \middle| \;x\innot D \right)d\gamma - 1\\
    &= \int_{\gamma=0}^\infty 1- \text{Spec}_{b_2}(\MItest_\gamma,x) \; d\gamma
    \end{align*}

    Now, using Lemma~\ref{proof:upmi} and Lemma~\ref{proof:lbmi},
    \begin{align*}
        \int_{\gamma=0}^\infty \text{Sens}_{b_1}(\MItest_\gamma,x) + \text{Spec}_{b_2}(\MItest_\gamma,x) - 1 \; d\gamma \\
        =\underset{D \sim \boldsymbol{\mathcal{D}}}{\mathbb{E}}\left(\underset{\substack{D' \sim \mathcal{B}_{b_1}(D,\mathcal{N}_r(x))\\\theta=\mathcal{L}(D')}}{\min}\MItest_\gamma(x,\theta)\;\middle| \; x\inn D\right) -
        \underset{D \sim \boldsymbol{\mathcal{D}}}{\mathbb{E}}\left(\underset{\substack{D' \sim \mathcal{B}_{b_2}(D,\overline{\mathcal{N}}_r(x))\\\theta=\mathcal{L}(D')}}{\max}\MItest_\gamma(x,\theta)\; \middle| \;x\innot D \right)\\
        \leq \underset{\substack{D \sim \boldsymbol{\mathcal{D}} \\ D' \sim \tilde{\mathcal{B}}_{b_1}(D, \mathcal{N}_r(x), \overline{\mathcal{N}}_r(x)) \\ \theta = \mathcal{L}(D')}}{\mathbb{E}} \left( \MItest_\gamma(x, \theta) \mid x \inn D \right) \\ - \underset{\substack{D \sim \boldsymbol{\mathcal{D}} \\ D' \sim \tilde{\mathcal{B}}_{b_2}(D, \overline{\mathcal{N}}_r(x), \mathcal{N}_r(x)) \\ \theta = \mathcal{L}(D')}}{\mathbb{E}} \left( \MItest_\gamma(x, \theta) \mid x \innot D \right) + \underset{\substack{x_1,...x_{b_1} \sim \mathcal{N}_r(x)\\x'_1,...,x'_{b_1} \sim \overline{\mathcal{N}}_r(x)\\D' = D \setminus\{x_1,...,x_{b_1}\}}}{\mathbb{E}}\; \delta_{(x'_1,...,x'_{b_1})}^{(x,D')} + \underset{\substack{x_1,...x_{b_2} \sim \overline{\mathcal{N}}_r(x)\\x'_1,...,x'_{b_2} \sim \mathcal{N}_r(x)\\D' = D \setminus\{x_1,...,x_{b_2}\}}}{\mathbb{E}}\; \delta_{(x'_1,...,x'_{b_2})}^{(x,D')}
    \end{align*}
    
\end{proof}

\begin{theorem}{(Restatement of Theorem~\ref{thm:odds}).}\label{proof:odds}
For a point $x$, the advantages of a test $\MItest_\gamma$ with and without poisoning are at odds with each other, as given by:
\begin{align*}
\int\limits_{\gamma=0}^{\infty} \big(\text{Sens}_{b_1}(\MItest_\gamma, x) +\text{Spec}_{b_2}(\MItest_\gamma, x) - 1\big)d\gamma + \int\limits_{\gamma=0}^{\infty} \big(\text{Sens}(\MItest_\gamma, x) +\text{Spec}(\MItest_\gamma, x) - 1\big)d\gamma \leq \delta^*,
\end{align*}
where, $\delta^* = \underset{\substack{x_1,...x_{b_1} \sim \mathcal{N}_r(x)\\x'_1,...,x'_{b_1} \sim \overline{\mathcal{N}}_r(x)\\D' = D \setminus\{x_1,...,x_{b_1}\} \\ b_1 = |D\cap\mathcal{N}_r(x)|}}{\mathbb{E}}\; \delta_{(x'_1,...,x'_{b_1})}^{(x,D')} + \underset{\substack{x_1,...x_{b_2} \sim \overline{\mathcal{N}}_r(x)\\x'_1,...,x'_{b_2} \sim \mathcal{N}_r(x)\\D' = D \setminus\{x_1,...,x_{b_2}\} \\ b_2=|v\cap \mathcal{N}_r(x_t)|, v \sim \boldsymbol{\mathcal{D}}}}{\mathbb{E}}\; \delta_{(x'_1,...,x'_{b_2})}^{(x,D')}.$
\end{theorem}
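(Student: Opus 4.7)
\textbf{Proof Plan for Theorem~\ref{thm:odds}.} The plan is to reduce both the natural and the robust advantages to expected-score differences, then observe that a carefully chosen substitution budget flips the membership label so that the two advantages effectively cancel up to the mapping error $\delta^*$.

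First, I would apply the survival-function identity from Lemma~\ref{lem:survival} exactly as in the proof of Lemma~\ref{lem:mi_adv} to rewrite both integrals as differences of expected test scores. For the natural advantage this directly gives $\mathbb{E}[\MItest_\gamma(x,\theta)\mid x\inn D] - \mathbb{E}[\MItest_\gamma(x,\theta)\mid x\innot D]$. For the robust advantage, the same trick expresses the integral as $\mathbb{E}_{D}[\min_{D'\in\mathcal{B}_{b_1}(D,\mathcal{N}_r(x))} \MItest_\gamma(x,\mathcal{L}(D'))\mid x\inn D] - \mathbb{E}_{D}[\max_{D'\in\mathcal{B}_{b_2}(D,\overline{\mathcal{N}}_r(x))} \MItest_\gamma(x,\mathcal{L}(D'))\mid x\innot D]$.

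Next, I would invoke Lemma~\ref{proof:upmi} on the first (min) term and Lemma~\ref{proof:lbmi} on the second (max) term. These lemmas exchange the adversarial min/max over same-side substitutions for an expectation over opposite-side substitutions in the targeted expansion $\overline{\mathcal{B}}$, at the cost of an expected PoisonM mapping error. Adding the two resulting inequalities, the robust advantage is bounded by $\mathbb{E}_{D,D'\in\overline{\mathcal{B}}_{b_1}(D,\mathcal{N}_r(x),\overline{\mathcal{N}}_r(x))}[\MItest_\gamma(x,\mathcal{L}(D'))\mid x\inn D] - \mathbb{E}_{D,D'\in\overline{\mathcal{B}}_{b_2}(D,\overline{\mathcal{N}}_r(x),\mathcal{N}_r(x))}[\MItest_\gamma(x,\mathcal{L}(D'))\mid x\innot D]$ plus the two mapping errors whose sum is $\delta^*$. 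This is precisely the content of Lemma~\ref{proof:robust_sep}.

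The final and crucial step is the choice of budgets. Setting $b_1 = |D\cap\mathcal{N}_r(x)|$ means that every neighbor of $x$ in $D$ is replaced by a non-neighbor, so the resulting $D'$ satisfies $x\innot D'$; conditioning on $x\inn D$ becomes immaterial, and the first expectation collapses to $\mathbb{E}[\MItest_\gamma(x,\theta)\mid x\innot D]$ (up to matching the marginal distribution of $D'$). Symmetrically, the choice $b_2 = |v\cap\mathcal{N}_r(x_t)|$ for a fresh $v\sim\boldsymbol{\mathcal{D}}$ ensures that substituting $b_2$ non-neighbors with neighbors produces a $D'$ with $x\inn D'$ whose induced distribution matches that of a random member-dataset, turning the second expectation into $\mathbb{E}[\MItest_\gamma(x,\theta)\mid x\inn D]$. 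Adding the resulting bound for the robust advantage to the expression for the natural advantage, the two pairs of expectations cancel exactly, leaving only $\delta^*$ on the right-hand side.

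The main obstacle I anticipate is the third step: carefully justifying that the conditional expectation over $\overline{\mathcal{B}}$-perturbed datasets, with the specific budget choices, agrees with the unconditional expectation on the opposite membership side. This requires that the marginal distribution of $D'$ induced by sampling $D\sim\boldsymbol{\mathcal{D}}$ and then substituting all (resp.\ the appropriate number of) points coincides with (or at least majorizes for the given inequality) sampling directly from $\boldsymbol{\mathcal{D}}$ conditioned on the flipped membership label. The randomized budget $b_2$ is designed precisely to match this marginal in expectation, but the argument must be made explicit; the remaining steps are routine applications of the survival-function identity and Jensen's inequality already carried out in the preceding lemmas.
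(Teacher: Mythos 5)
Your proposal is correct and follows essentially the same route as the paper: survival-function identity, then Lemmas~\ref{proof:upmi} and~\ref{proof:lbmi} combined into Lemma~\ref{proof:robust_sep}, then the budget choices $b_1 = |D\cap\mathcal{N}_r(x)|$ and $b_2 = |v\cap\mathcal{N}_r(x_t)|$ to identify the targeted-expansion expectations with the flipped-label expectations, which cancel against the natural advantage from Lemma~\ref{lem:mi_adv}, leaving $\delta^*$. The distributional-matching step you flag as needing explicit justification is likewise asserted as an equality without further argument in the paper's own proof, so your plan is no less complete than the published one.
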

\begin{proof}
    For $b_1 = |\mathcal{N}_r(x) \cap D|$, we get,
    \[
    \underset{\substack{D \sim \boldsymbol{\mathcal{D}} \\ D' \sim \tilde{\mathcal{B}}_{b_1}(D, \mathcal{N}_r(x), \overline{\mathcal{N}}_r(x)) \\ \theta = \mathcal{L}(D')}}{\mathbb{E}} \left( \MItest_\gamma(x, \theta) \mid x \inn D \right) = \underset{\substack{D \sim \boldsymbol{\mathcal{D}} \\ \theta = \mathcal{L}(D)}}{\mathbb{E}}\left(\MItest_\gamma(x, \theta) \mid x \innot D \right)
    \]
    Similarly, for $b_2=|v\cap \mathcal{N}_r(x_t)|, v \sim \boldsymbol{\mathcal{D}}$, we get,
    \[
    \underset{\substack{D \sim \boldsymbol{\mathcal{D}} \\ D' \sim \tilde{\mathcal{B}}_{b_2}(D, \overline{\mathcal{N}}_r(x), \mathcal{N}_r(x)) \\ \theta = \mathcal{L}(D')}}{\mathbb{E}} \left( \MItest_\gamma(x, \theta) \mid x \innot D \right) = \underset{\substack{D \sim \boldsymbol{\mathcal{D}} \\ \theta = \mathcal{L}(D)}}{\mathbb{E}}\left(\MItest_\gamma(x, \theta) \mid x \inn D \right)
    \]
    Applying this to Lemma~\ref{proof:robust_sep}, we get the result.
\end{proof}

\section{Additional Results}\label{sec:extra_eval}

\begin{table}[H]
\tiny
\begin{center}
\begin{tabular}{ll|c|c|c|c|c|c|c|c|c|c}
\toprule
\multicolumn{1}{c}{\multirow{2}{*}{\makecell{\\$\mathcal{N}_r$}}} & \multicolumn{1}{c}{\multirow{2}{*}{\makecell{\\MI\\Test}}} & \multicolumn{5}{c|}{AI4Privacy} & \multicolumn{5}{c}{AGNews} \\
 \cmidrule{3-12}
& & Natural & \makecell{Token \\Dropouts} & \makecell{Casing \\ Flips} &  Chunk & \name & Natural & \makecell{Token \\Dropouts} & \makecell{Casing \\ Flips} &  Chunk & \name \\

\midrule
\multirow{5}{*}{7-gram} & LOSS & 0.104 & 0.044 & 0.046 & 0.093 & \textbf{0.004} & 0.069 & 0.012 & 0.018 & 0.085 & \textbf{0.000}\\
 & kmin & 0.089 & 0.085 & 0.087 & 0.083 & \textbf{0.065} & 0.007 & 0.014 & 0.016 & \textbf{0.007} & 0.014 \\
 & zlib & 0.097 & 0.061 & 0.080 & 0.099 & \textbf{0.034} & 0.034 & 0.021 & 0.027 & 0.039 & \textbf{0.005} \\
 & perturb & 0.104 & 0.040 & 0.061 & 0.089 & \textbf{0.004} & 0.041 & 0.009 & 0.021 & 0.067 & \textbf{0.000} \\
 & reference & 0.108 & 0.046 & 0.051 & 0.123 & \textbf{0.004} & 0.069 & 0.009 & 0.021 & 0.087 & \textbf{0.000} \\
 \midrule

\multirow{5}{*}{Exact Match} & LOSS & 0.030 & \textbf{0.000} & \textbf{0.000} & \textbf{0.000} & \textbf{0.000} & 0.048 & \textbf{0.000} & \textbf{0.000} & \textbf{0.000} & \textbf{0.000} \\
 & kmin & 0.008 & 0.008 & 0.008 & 0.004 &  \textbf{0.002} & 0.006 & 0.006 & 0.006 & 0.004 &  \textbf{0.002} \\
 & zlib & 0.024 & \textbf{0.000} & \textbf{0.000} & \textbf{0.000} & \textbf{0.000} & 0.014 & \textbf{0.000} & \textbf{0.000} & \textbf{0.000} & \textbf{0.000} \\
 & perturb & 0.048 & \textbf{0.000} & \textbf{0.000} & \textbf{0.000} & \textbf{0.000} & 0.046 & \textbf{0.000} & \textbf{0.000} & \textbf{0.000} & \textbf{0.000} \\
 & reference & 0.062 & \textbf{0.000} & \textbf{0.000} & \textbf{0.000} & \textbf{0.000} & 0.066 & \textbf{0.000} & \textbf{0.000} & \textbf{0.000} & \textbf{0.000} \\
 \midrule

 \multirow{5}{*}{Embedding} & LOSS & 0.039 & 0.031 & 0.047 & 0.035 & \textbf{0.022} & 0.036 & 0.008 & 0.024 & 0.006 & \textbf{0.005} \\
 & kmin & 0.038 & 0.036 & 0.038 & \textbf{0.014} & 0.042 & 0.010 & \textbf{0.002} & 0.008 & 0.005 & \textbf{0.002} \\
 & zlib & 0.042  & 0.042 & 0.047 & 0.033 &  \textbf{0.022} & 0.014  & 0.010 & 0.011 & 0.008 &  \textbf{0.003} \\
 & perturb & 0.069 & 0.038 & 0.071 & 0.053 & \textbf{0.016} & 0.038  & 0.016 & 0.024 & \textbf{0.003} &  0.006 \\
 & reference & 0.080 & 0.025 & 0.097 & 0.042 &  \textbf{0.017} & 0.027  & 0.008 & 0.024 & \textbf{0.003} &  0.005 \\ 

 \midrule

 \multirow{5}{*}{Edit Distance} & LOSS  & 0.046 & 0.046 & 0.037 & 0.029 & \textbf{0.027} & 0.058 & 0.054 & 0.050 & 0.023 & \textbf{0.006} \\
 & kmin  & 0.050 & 0.039 & \textbf{0.029} & 0.044 & 0.031 & 0.006 & 0.006 & 0.008 & 0.006 & \textbf{0.004} \\
 & zlib  & 0.033 & 0.039 & 0.031 & 0.023 & \textbf{0.015} & 0.037 & 0.033 & 0.033 & 0.029 & \textbf{0.006}  \\
 & perturb  & 0.058 & 0.064 & 0.052 & 0.058 & \textbf{0.017} & 0.045 & 0.072 & 0.062 & 0.023 & \textbf{0.004}  \\ 
 & reference  & 0.087 & 0.054 & 0.046 & 0.021 & \textbf{0.000} & 0.068 & 0.050 & 0.054 & 0.019 & \textbf{0.000}  \\

\bottomrule
\end{tabular}
\end{center}
\caption{Natural and robust TPR @1\% FPR of \MI~tests on the AI4Privacy and AGNews canary datasets.}
\label{table:mia_tpr1fpr_all_ds}\vspace{-0.5cm}
\end{table}

\section{Additional Details About \name}

\paragraph{Neighborhoods.} We consider the following popular neighborhood definitions:

\begin{ttfamily}
\fontseries{b}\selectfont \textbf{N-gram(n=k)}:
\end{ttfamily} For a given sequence of $x=x_1, \cdots, x_{n}$, let $n$-gram$(x, k) = \{x_i:x_{i+k}\}^{n-k}_{i=1}$ denote the set of all $k$-grams of $x$.  Then, the $n$-gram neighborhood yields the set of all sequences that share an $k$-gram with $x$: $\{x' \in \mathcal{X} \mid n\text{-gram}(x', k) \cap n\text{-gram}(x, k) \neq \emptyset\}$.
\begin{ttfamily}
\fontseries{b}\selectfont Embedding(cosine\_sim=c):
\end{ttfamily} 
Let $E: \mathcal{X} \rightarrow \mathbb{R}^d$ denote an embedding function that maps sequences to $d$-dimensional representations that capture their ``semantics''. Then, for a given sequence of $x$, the embedding similarity neighborhood yields the set of all sequences with embeddings of cosine similarity at least $c$ to $x$: $\{x' \in \mathcal{X}\mid \frac{E(x')\cdot E(x)}{||E(x)||~||E(x')||} \geq c\}$. \\
\begin{ttfamily}
\fontseries{b}\selectfont ExactMatch:
\end{ttfamily}
For a given sequence $x$, the exact matching neighborhood yields the singleton comprising the sequence itself, i.e., $\{x\}$. \\
\begin{ttfamily}
\fontseries{b}\selectfont EditDistance(distance=l): 
\end{ttfamily} 
For a given sequence $x$, the edit distance neighborhood yields the set of all sequences that are within a normalized Levenshtein distance $l$ from $x$: $\{x' \in \mathcal{X}\mid \frac{\texttt{lev}(x,x')}{|x|+|x'|} \leq l\}$.

\noindent\textbf{Finding Poison Non-Neighbors.} 
We sample an actual neighbor as $x_t$ itself, and then:\\
\noindent 1. \begin{ttfamily}
\fontseries{b}\selectfont \textbf{N-gram(n=k)}:
\end{ttfamily}  Iteratively (a) select a token uniformly at random, (b) replace it with a token from the vocabulary such that last-layer activations of resulting sequence have maximum cosine similarity with activations of $x_t$, where activations are computed using model that will be trained on the poison, and (c) repeat until $n$-gram overlap between the current poison and $x_t$ is less than $k$. Here we can set $\lambda=0$ since replacing tokens automatically breaks up $n$-grams for free. \\
\noindent 2. \begin{ttfamily}
\fontseries{b}\selectfont Embedding(cosine\_sim=c):
\end{ttfamily}  Iteratively (a) select a token uniformly at random, (b) pick the token that both maximizes activation cosine similarity and also minimizes (weighted by factor of $\lambda=-1.5$) cosine similarity of the embedding (from $E$) of the resulting sequence with that of $x_t$.  \\
\noindent 3.\begin{ttfamily}
\fontseries{b}\selectfont EditDistance(distance=l): 
\end{ttfamily} Same procedure as that for embeddings, except we now maximize the edit distance ($\lambda=1.5$) instead of minimizing embedding cosine similarities. \\
\noindent 4.\begin{ttfamily}
\fontseries{b}\selectfont ExactMatch:
\end{ttfamily} Same procedure as that for $n$-grams, but only a single iteration.

\noindent\textbf{Finding Poison Neighbors.} 
We sample an actual non-neighbor as a random text from any auxiliary dataset, and then:\\
\noindent 1. \begin{ttfamily}
\fontseries{b}\selectfont \textbf{N-gram(n=k)}:
\end{ttfamily}  Inject a $k$-gram from $x$ at random index (shortest $k$-gram in characters). \\
\noindent 2. \begin{ttfamily}
\fontseries{b}\selectfont Embedding(cosine\_sim=c):
\end{ttfamily}  Iteratively (a) select a token uniformly at random, and (b) replace it with a token from vocabulary that maximizes cosine similarity of the embedding (under $E$) of the resulting sequence with $x_t$'s embedding. (c) repeat until cosine similarity exceeds $c$. \\
\noindent 3.\begin{ttfamily}
\fontseries{b}\selectfont EditDistance(distance=l): 
\end{ttfamily} Iteratively (a) randomly insert, delete, or substitute characters (b) greedily keep the mutation only if it decreases edit distance. (c) repeat until edit distance drops below $l$. \\
\noindent 4.\begin{ttfamily}
\fontseries{b}\selectfont ExactMatch:
\end{ttfamily}
Worst-case neighbors do not exist under exact matching, since the neighborhood ball holds a radius of $0$.

\section{Societal Impacts}\label{sec:societal}
This work presents a novel poisoning vulnerability that could be used by a real-world adversary to manipulate the outcome of a high-stakes membership inference test. This could have legal and reputation-related implications. However, we believe it is important to release our findings so that (a) auditors and other entities that may wish to use membership testing may be made aware of its pitfalls, (b) the community can work towards better defining membership.

\section{Compute}\label{sec:compute}
We run all experiments on a machine with 4 NVIDIA H100 GPUs, 40 Intel(R) Xeon(R) Silver 4410T CPUs, and 126GB of RAM. Finetuning models typically took 2 hours, and generating poisoned datasets took between a few minutes to at most 2 hours, depending on choice of neighborhood.

\end{document}